\documentclass{article}

\usepackage[margin=1in]{geometry}
\usepackage{amsmath,amsthm,amssymb}
\usepackage{commath,mathtools}
\usepackage{algorithm,algorithmic}
\usepackage{graphicx}
\usepackage{subfigure}
\usepackage{hyperref}
\usepackage{color}
\usepackage{rotating,booktabs}
\usepackage{lscape}
\usepackage[sort,nocompress]{cite}
\usepackage{multirow}
\usepackage{float}
\usepackage[normalem]{ulem}
\newtheorem{theorem}{Theorem}[section]

\newtheorem{lemma}[theorem]{Lemma}
\newtheorem{definition}{Definition}[section]
\newtheorem*{remark}{Remarks}

\usepackage[pagewise,mathlines]{lineno}

\newcommand{\bbf}{\mathbf{b}}
\newcommand{\cbf}{\mathbf{c}}
\newcommand{\xbf}{\mathbf{x}}
\newcommand{\ybf}{\mathbf{y}}
\newcommand{\zbf}{\mathbf{z}}
\newcommand{\gbf}{\mathbf{g}}
\newcommand{\ubf}{\mathbf{u}}
\newcommand{\vbf}{\mathbf{v}}
\newcommand{\wbf}{\mathbf{w}}

\newcommand{\Abf}{\mathbf{A}}
\newcommand{\Wbf}{\mathbf{W}}
\newcommand{\Xbf}{\mathbf{X}}
\newcommand{\Bbf}{\mathbf{B}}
\newcommand{\reps}{r_{\varepsilon}}

\newcommand{\xk}{\mathbf{x}_k}

\newcommand{\xkp}{\mathbf{x}_{k+1}}
\newcommand{\zkp}{\mathbf{z}_{k+1}}
\newcommand{\ukp}{\mathbf{u}_{k+1}}
\newcommand{\vkp}{\mathbf{v}_{k+1}}

\newcommand{\Ccal}{\mathcal{C}}
\newcommand{\Scal}{\mathcal{S}}
\newcommand{\Xcal}{\mathcal{X}}
\newcommand{\Ycal}{\mathcal{Y}}

\newcommand{\hbf}{\mathbf{h}}
\newcommand{\gi}{\gbf_i}
\newcommand{\epsk}{\varepsilon_{k}}
\newcommand{\epskp}{\varepsilon_{k+1}}

\newcommand{\pkp}{\mathbf{p}_{k+1}}
\newcommand{\kl}{k_{l}}
\newcommand{\klp}{k_{l+1}}
\newcommand{\Lepsk}{L_{\varepsilon_{k}}}

\newcommand{\sji}{\mathbf{s}_{j,i}}

\newcommand{\xjp}{\mathbf{x}_{j+1}}
\newcommand{\epsj}{\varepsilon_{j}}
\newcommand{\xhat}{\hat{\mathbf{x}}}

\newcommand{\etol}{\epsilon_{\mathrm{tol}}}

\DeclareMathOperator{\dist}{\mathrm{dist}}

\DeclareMathOperator*{\argmin}{\mathrm{arg\,min}}
\DeclareMathOperator*{\argmax}{\mathrm{arg\,max}}

\newcommand{\blue}[1]{{\color{black}{#1}}}

\newcommand{\newblue}[1]{{\color{black}{#1}}}

\title{Learnable Descent Algorithm for Nonsmooth Nonconvex Image Reconstruction}
\author{
Yunmei Chen\thanks{Department of Mathematics, University of Florida, Gainesville, FL 32611, USA (\texttt{yun@math.ufl.edu}).} \and
Hongcheng Liu\thanks{Industrial and Systems Engineering, University of Florida, Gainesville, FL 32611, USA (\texttt{liu.h@ufl.edu}).} \and
Xiaojing Ye\thanks{Department of Mathematics and Statistics, Georgia State University, Atlanta, GA 30303, USA (\texttt{xye@gsu.edu}).} \and
Qingchao Zhang\thanks{Department of Mathematics, University of Florida, Gainesville, FL 32611, USA (\texttt{qingchaozhang@ufl.edu}).} \and  
}
\date{}

\begin{document}

\maketitle

\begin{abstract}
We propose a general learning based framework for solving nonsmooth and nonconvex image reconstruction problems. We model the regularization function as the composition of the $l_{2,1}$ norm and a smooth but nonconvex feature mapping parametrized as a deep convolutional neural network. We develop a descent-type algorithm to solve the nonsmooth nonconvex minimization problem by leveraging the Nesterov's smoothing technique and the idea of residual learning, and learn the network parameters such that the outputs of the algorithm match the references in training data. Our method is versatile as one can employ various modern network structures into the regularization, and the resulting network inherits the convergence properties of the algorithm. We also show that the proposed network is parameter-efficient and its performance compares favorably to the state-of-the-art methods in a variety of image reconstruction problems in practice.
\end{abstract}

\section{Introduction}
\label{sec:intro}

In the past several decades, variational methods and optimization techniques have been extensively studied for solving image reconstruction problems. For example, a number of regularizers, including total variation (TV) \cite{rudin1992nonlinear}, nonlocal TV \cite{buades2005non,buades2010image},
generalized TV \cite{bredies2010total}, and minmax-concave TV \cite{du2018minmax}, have been proposed to improve the classical Tikhonov-type regularizers in image reconstruction. Advanced optimization techniques were also developed to solve these nonsmooth and/or nonconvex reconstruction models for better computational efficiency, often by leveraging the special structures of the regularizers. However, the image reconstruction quality heavily depends on these hand-crafted regularizers, which are still overly simple and incapable to capture the complex structural features of images. Moreover, the slow convergence and subtle parameter tuning of the optimization algorithms have hindered their applications in real-world image reconstruction problems.

Recent years have witnessed the tremendous success of deep learning in a large variety of real-world application fields \cite{DLH14,HSW89,LPW17,Yarotsky}.
At the heart of deep learning are the deep neural networks (DNNs) which have provable representation power and the substantial amount of data available nowadays for training these DNNs. 
Deep learning was mostly used as a data-driven approach since the DNNs can be trained with little or no knowledge about the underlying functions to be approximated.
However, there are several major issues of such standard deep learning approaches: (i) Generic DNNs may fail to approximate the desired functions if the training data is scarce; (ii) The training of these DNNs are prone to overfitting, noises, and outliers; and (iii) The trained DNNs are mostly ``blackboxes'' without rigorous mathematical justification and can be very difficult to interpret. 

\subsection{Background of learnable optimization algorithms}
To mitigate the aforementioned issues of DNNs, a class of \emph{learnable optimization algorithms} (LOAs) has been proposed recently.
The main idea of LOA is to map a known iterative optimization algorithm to a DNN. The DNN is restricted to have a small number of blocks, where each block (also called a phase) mimics one iteration of the algorithm but with certain components replaced by network layers, and the network parameter $\theta$ of the DNN is learned such that the outputs of the DNN fit the desired solutions given in the training data.

Consider the standard setting of supervised learning with a pair of training data $(\bbf, \hat{\xbf})$, where $\bbf$ is the input data of the DNN, for instance, a noisy and/or corrupted image or compressed or encoded data, and $\hat{\xbf}$ is the corresponding ground truth high quality image that the output of the DNN is expected to match.
\blue{
We study a general framework of LOA which can be described by the following problem:
%
%
%
\begin{equation}\label{eq:loa}
\mathbf{x}_{\theta} = \argmin_{\xbf \in \Xcal}\, \{\phi(\mathbf{x}; \bbf, \theta) := f(\xbf; \bbf) + r(\xbf; \theta) \},
\end{equation}
where $f$ is the data fidelity term to ensure that the reconstructed image $\mathbf{x}$ is faithful to the given data $\mathbf{b}$, $r$ is the regularization that may incorporate proper prior information of $\mathbf{x}$, and $\Xcal \subset \mathbb{R}^n$ is the admissible set of solutions.} 
The regularization $r(\cdot; \theta)$ is realized as a DNN with parameter $\theta$ to be learned. 
\blue{If needed, the data fidelity term $f$ can also incorporate learnable components, in which case the parameters in $r$ and $f$ are collectively denoted by $\theta$. However, in the present work, we consider the case where only $r$ involves $\theta$ for simplicity, as the generalization is straightforward.}

\blue{
The optimal parameter $\theta$ is obtained by minimizing the loss function, $\mathcal{L}(\mathbf{x}_{\theta}, \hat{\xbf})$, which measures the difference between $\mathbf{x}_{\theta}$ and the ground truth reference image $\hat{\xbf}$ corresponding to the data $\bbf$. A typical choice is $\mathcal{L}(\mathbf{x}_{\theta}, \hat{\xbf})= (1/2)\cdot \| \mathbf{x}_{\theta}- \hat{\xbf} \|^2$. }
In practice, we are provided a training set of $N$ pairs $\{(\hat{\xbf}^{(s)},\bbf^{(s)}): s \in [N]\}$, so that \eqref{eq:loa} can be solved for $N$ instances with the shared parameter $\theta$. In this case, each data pair $(\hat{\xbf}^{(s)},\bbf^{(s)})$ yields a solution $\xbf_{\theta}^{(s)}$, and the total loss function can be set to the average $(1/N)\cdot \sum_{s=1}^{N} \mathcal{L}(\xbf_{\theta}^{(s)},\hat{\xbf}^{(s)})$. 
%

%
\blue{The training of $\theta$ can be cast as a bi-level optimization
that minimizes $(1/N)\cdot \sum_{s=1}^{N} \mathcal{L}(\xbf_{\theta}^{(s)},\hat{\xbf}^{(s)})$ with respect to $\theta$, subject to the constraint \eqref{eq:loa} which is called the lower-level problem. However, such bi-level problems are generally very challenging to solve.
A typical workaround is to approximate the actual minimizer $\xbf_{\theta}$ of \eqref{eq:loa} by the output of a LOA-based DNN (different from the DNN $r$) which mimics an iterative optimization scheme for solving the lower-level minimization in the constraint of \eqref{eq:loa} with a small number of iterations (e.g., 15).
In this case, the bi-level optimization is effectively a standard optimization as the approximate solution $\xbf_{\theta}$ is explicitly dependent on the parameter $\theta$; and the optimal $\theta$ can be obtained by applying (stochastic) gradient descent based algorithms, such as ADAM \cite{kingma2014adam}, to minimize the total loss function with respect to $\theta$, as in standard deep learning methods. Thus the trained LOA can output high-quality image with only a small number of iterations in practice. This is the most significant advantage of the deep learning based methods and can greatly reduce computational time in practice.}
LOA is also widely known as the \emph{unrolling method}, as the iteration scheme of the optimization algorithm is ``unrolled'' into multiple blocks of the LOA-based DNN.
%
%
%
However, despite of their promising performance in practice, a large number of existing LOA-based DNNs only superficially resemble the steps of optimization algorithms, and hence they do not really yield a convergent algorithm or correspond to solving any interpretable variational model as the one in \eqref{eq:loa}. As a result, they lack theoretical justifications and convergence guarantees.
\blue{It is worth noting that, although training the parameter $\theta$ in \eqref{eq:loa} is not a bi-level problem with the aforementioned finite-iteration approximation, LOA is not a standalone optimization algorithm neither. This is because that a LOA should not only have theoretical convergence guarantee and high empirical performance in solving (1), but also allow efficient training of the network parameter $\theta$ using training data, as in the upper problem of the bi-level formulation. This work is aimed at designing such a LOA with provable convergence and iteration complexity for solving \eqref{eq:loa} with any fixed $\theta$, whereas the optimal network parameter $\theta$ is still obtained by ADAM as in standard deep network training.}

\subsection{Our goal and approach}

Our goal in this work is to develop a general LOA framework for solving nonsmooth and nonconvex image reconstruction problems. More precisely, we will develop a novel algorithm for solving
\eqref{eq:loa}, where the regularization function is a learnable nonsmooth and nonconvex function. The proposed LOA for  has the following perperties: \emph{Versatility}---our method is flexible and allows users to plug in various kinds of deep neural networks for learning the objective function; \emph{Convergence}---we can ensure convergence of our network with well-trained parameters as its architecture follows exactly the proposed algorithm for solving the nonsmooth and nonconvex optimization in \eqref{eq:loa}; and \emph{Performance}---our method can adaptively learn the regularization function from the training data, such that it is competitive and can even outperform the state-of-the-art methods in terms of both reconstruction accuracy and efficiency in practice.

To this end, we consider to learn the minimization \eqref{eq:loa} for image reconstruction (we assume $\Xcal=\mathbb{R}^n$ for simplicity throughout this work), such that its solution is close to the ground truth high quality image in the training data. Specifically, we use a composited structure of the regularization $r$ as the $l_{2,1}$ norm of a learnable feature mapping $\gbf$ realized by a deep neural network to enhance the sparsity of the feature map of the underlying image. Both $f$ and $\gbf$ are smooth but (possibly) nonconvex, and the overall objective function is nonsmooth and nonconvex. \blue{It is worth pointing out that, the algorithm for solving \eqref{eq:loa}
determines the architecture of the deep neural network, hence the design of an
LOA  should not only consider the convergence and efficiency for solving (1), but also the ability to assist the training of the parameter $\theta$, i.e., reducing the error in minimizing the loss function.} This work is aimed at designing such a LOA with provable convergence and iteration complexity 

We propose a descent-type algorithm to solve this nonsmooth nonconvex problem as follows: (i) We tackle the nonsmoothness by employing Nesterov's smoothing technique \cite{nesterov2005smooth} with automatic diminishing smoothing effect; (ii) We propose two successive residual-type updates, the first one is on $f$ and the second one on $r$, a key idea proven very effective in deep network training \cite{ResNet}, and compute 
the convex combination of the two updates for the next iteration; and (iii) We employ an iterate selection policy based on objective function value to ensure convergence.
Moreover, we prove that a subsequence generated by the proposed LOA has accumulation points and all of them are Clarke stationary points of the nonsmooth nonconvex problem \eqref{eq:loa}. 


\subsection{Notations and organization}
\label{subsec:notation}
We denote $[n]:=\{1,\dots,n\}$ for $n\in \mathbb{N}$. We use regular lower-case letters to denote scalars and scalar-valued functions, and boldfaced lower-case letters for vectors and vector-valued functions. Unless otherwise noted, all vectors are column vectors. The inner product of two vectors $\xbf$ and $\ybf$ is denoted by $\langle \xbf, \ybf \rangle$, and $\|\xbf\| = \|\xbf\|_2$ stands for the $l_2$ norm of $\xbf$ and $\|\Abf\|$ the induced $l_2$ norm of the matrix $\Abf$, and $\Abf^{\top}$ is the transpose of $\Abf$. For any set $\Scal \subset \mathbb{R}^n$, we denote $\dist(\ybf, \Scal) := \inf\{ \|\ybf - \xbf\| \ \vert \ \xbf \in \Scal \}$, and  $\Scal + \ybf := \{\xbf + \ybf\ \vert \ \xbf \in \Scal\}$. \blue{Also note that $\nabla \gbf(\xbf) \in \mathbb{R}^{d \times n}$ for any $\gbf: \mathbb{R}^n \to \mathbb{R}^d$ and $\xbf \in \mathbb{R}^{n}$.} 

The remainder of this paper is organized as follows. In Section \ref{sec:related}, we review the recent literature on LOA and general nonsmooth nonconvex optimization methods. In Section \ref{sec:proposed}, we present our LOA based on a descent type algorithm to solve the nonsmooth nonconvex image reconstruction problem with comprehensive convergence analysis. In Section \ref{sec:experiment}, we conduct a number of numerical experiments on natural and medical image dataset to show the promising performance of our proposed method. We provide several concluding remarks in Section \ref{sec:conclusion}.

\section{Related Work}
\label{sec:related}

\subsection{Learnable optimization algorithms}

Learnable optimization algorithm (LOA) is a class of methods developed in recent years to imitate the iterations in optimization algorithms as blocks in a deep neural network with certain components replaced by learnable layers.
Existing LOAs can be approximately categorized into two groups.

The first group of LOAs appeared in the literature are motivated by the similarity between the iterative scheme of a traditional optimization algorithm (e.g., proximal gradient algorithm) and a feed forward neural network. 
Provided instances of training data, such as ground truth solutions, a LOA replaces certain components of the optimization algorithm with parameters to be learned from the data.
The pioneer work \cite{KY2010} in this group of LOAs is based on the well-known iterative shrinkage thresholding algorithm (ISTA) for solving the LASSO problem.
In \cite{KY2010}, a learned ISTA network, called LISTA, is proposed to replace $\Phi^{\top
}$ by a weight matrix to be learned from instance data to reduce iteration complexity of the original ISTA. 
The asymptotic linear convergence rate for LISTA is established in \cite{CLWY2018} and \cite{LCW2019}.
Several variants of LISTA were also developed using low rank or group sparsity \cite{SBS2015},  $\ell_0$ minimization \cite{XWG16} and learned approximate message passing \cite{BSR2017}.
The idea of LISTA has been extended to solve composite problems with linear constraints, known as the differentiable linearized alternating direction method of multipliers (D-LADMM) \cite{XWZ2019}.
These LOA methods, however, still employ handcrafted regularization and require closed form solution of the proximal operator of the regularization term.

To improve reconstruction quality, the other group of works follow a different approach by replacing the lower-level minimization in \eqref{eq:loa} with a DNN whose structure is inspired by a numerical optimization algorithm for solving the minimization problem.
For example, recall that the standard proximal gradient method applies a gradient descent step on the smooth function $\nabla f$ at the current iterate, and then the proximal mapping of $r$ to obtain the next iterate. In this case, a LOA can be obtained by replacing the proximal mapping of $r$ with a multilayer perceptrons (MLP), which can be learned using training data.
As such, one avoids explicit formation of the regularization $r$ for \eqref{eq:loa}.
This paradigm has been embedded into half quadratic splitting in DnCNN \cite{zhang2017beyond}, ADMM in \cite{chang2017one,meinhardt2017learning} and primal dual methods in \cite{AO18,LCW2019,meinhardt2017learning,wang2016proximal} to solve the subproblems. 
To improve the generic black-box CNNs above, several LOA methods  are proposed to incorporate certain prior knowledge about the solution in the design of $r$, then unroll numerical optimization algorithms as deep neural networks so as to preserve their efficient structures with proven efficiency, such as the ADMM-Net \cite{NIPS2016_6406}, Variational Network \cite{HKK18} and ISTA-Net \cite{Zhang2018ISTANetIO}.
These methods also prescribe the phase number $K$, and map each iteration of the corresponding numerical algorithm to one phase of the network, and learn specific components of the phases in the network using training data.

Despite of the promising performance in a variety of applications, the LOAs are only related to the original optimization algorithms superficially.
These LOAs themselves do not follow any provably convergent algorithm or correspond the solution of any properly defined variational problem.
Moreover, certain acceleration techniques proven to be useful for numerical optimization algorithms are not effective in their LOA counterparts.
For example, the acceleration approach based on momentum \cite{Nesterov} can significantly improve iteration complexity of traditional (proximal) gradient descent methods, but does not have noticeable improvement when deployed in LOAs.
This can be observed by the similar performance of ISTA-Net \cite{Zhang2018ISTANetIO} and FISTA-Net \cite{zhang2017ista}. 
One possible reason is that the LOA has nonconvex components, for which a linear combination of past iterates is potentially a worse extrapolation point in optimization \cite{LL15}.

In parallel to the development of LOAs, performance of deep networks for image reconstruction is continuously being improved due to various network engineering and training techniques these years.
For example, ISTA-Net$^+$ \cite{Zhang2018ISTANetIO} employs the residual network structure \cite{ResNet} and results in substantially increased reconstruction accuracy over ISTA-Net. 
The residual structure is also shown to improve network performance in a number of recent work, such as ResNet-v2 \cite{he2016identity}, WRN \cite{zagoruyko2016wide}, and ResNeXt \cite{xie2017aggregated}.
In image compression and reconstruction, the learnable sampling module is always implemented as a single convolutional layer without activation \cite{SJZ17, Shi_2019_CVPR, zhou2019multi,ZZG20,zhang2020amp}. Efficient block compressed sensing for high-dimensional data \cite{gan2007block} can be achieved by controlling the convolution kernel size and stride \cite{ SJZ17}.
Joint reconstruction to reduce blocky effects in image compressive sensing is proposed in \cite{SJZ17}, and the learned sampling operator is shown to automatically satisfy the orthogonal property in \cite{ZZG20}.
A multi-channel method is proposed in \cite{zhou2019multi} to elaborately manage the sensing resources by assigning different sampling ratio to image blocks. To obtain any desired sampling ratio, a scalable sampling and reconstruction is achieved through a greedy measurement based selection algorithm in \cite{Shi_2019_CVPR}.

\blue{
\subsection{Learning regularizer for inverse problems in imaging}
Learning regularizer from data in inverse problems, particularly with applications in image reconstruction, has emerged in recent years.
In \cite{li2020nett}, a general framework with data-driven regularizer, called network Tikhonov (NETT), is proposed to obtain nearly data-consistent solution with small regularization value. In this work, two types of learnable regularizers are considered: one is a weighted sum of the compositions of $q$th powered norm ($q>1$) and nonlinear deep neural networks, and the other one is a CNN regularizer. The convergence of the regularized solution to the regularization-minimizing solution is discussed; in particular, convergence rate in terms of the noise level and regularization weight is derived.
Learnable regularizer has also been exploited in the optimal control framework \cite{effland2020variational,kobler2020total}.
In \cite{effland2020variational}, a class of learnable Field-of-Experts regularizer is considered. To train the regularizer, an optimal control framework is adopted, where the gradient flow of the regularized inverse problem is used as the dynamical process. The optimal value of the regularization parameter and the stopping time, which play the role of control, is obtained such that the squared distance between the output and the given ground truth image is minimized. Numerically, a forward Euler discretization of the gradient flow is employed, which yields a static variational network. 
In \cite{kobler2020total}, a total deep variation regularizer is proposed for general linear inverse problems, and a similar optimal control framework is adopted to train the optimal regularization parameter and stopping time. In this work, a sensitivity analysis is also conducted to derive a computable upper error bounds of the proposed method.
In \cite{gilton2019learned}, a learnable patch-based regularizer is proposed for image reconstruction problems to alleviate the overfitting issue and reduce sample complexity of deep learning based approaches in practice.
}


\subsection{Nonsmooth and nonconvex optimization}

Nonsmooth nonconvex optimization has been extensively studied in recent years. One of the most common methods is the proximal gradient method (also known as the forward-backward splitting or FBS) \cite{FM81, ABS13, N13}. Several variants, including the accelerated proximal gradient method \cite{LL15} and the FBS with an inertial force \cite{BCL16, OCB14}, are proposed to improve the convergence rate.
Iteratively reweighted algorithms are developed to iteratively solve a proximal operator problem \cite{GZL13, ODB14, ZK14}. These algorithms are effective when the nonsmooth components involved in the subproblems are simple, i.e., the associated proximal operators have closed-form or are easy to solve. 

There are also a number of optimization algorithms developed for certain structured nonsmooth nonconvex problems. For instance, for a class of composite optimization problems involving $h(\cbf(\xbf))$, where $h$ is convex but nonsmooth, and $\cbf$ is smooth but (possibly) nonconvex, several linearized proximal type algorithms are proposed such that $\cbf$ is approximated by its linearization. This renders a convex subproblem in each iteration, which can be solved with exact \cite{LW16, DL16, OFB19} or inexact \cite{GST19} function and gradient evaluations.

If the problem of nonconvexity is due to the difference of convex (DC) functions, a number of optimization methods known as DC algorithms (DCA) are developed \cite{SOS16, WCP18, BBC19, NOS18}. DCA approximates a nonconvex DC problem by a sequence of convex ones, such that each iteration only involves a convex optimization problem. Several DC decomposition and approximation methods have been introduced for different applications \cite{LP05, PL14}. Recently, the proximal linearized algorithms for DC programming are proposed to iteratively solve the subproblem, where one of the convex components is replaced by its linear approximation together with a proximal term \cite{SOS16, BBC19}. In \cite{WCP18}, extrapolation is integrated into proximal linearized algorithm for possible acceleration of the proximal DCA. In \cite{NOS18}, an inexact  generalized proximal linearized algorithms is developed, where the Euclidean distance is replaced with a quasi distance in the proximal operator, and the proximal point is replaced with an approximate proximal point. However, the subproblem of DCA may not have closed-form solution and thus can still require inner iterations to solve.

The Kurdyka-{\L}ojasiewicz (KL) property has been leveraged to study the convergence in nonconvex nonsmooth optimization \cite{attouch2008alternating, ABR10,ABS13,frankel2015splitting}. The KL property is shown to hold for a large class of nonconvex functions used in practice and has been extensively exploited to analyze the convergence rate of first-order algorithms for nonconvex optimization. \blue{In the present work, we do not require the learnable regularizer to satisfy the KL property, and hence the proposed method and its iteration complexity analysis can be potentially applicable to a larger class of nonsmooth and nonconvex problems.}

To solve general nonconvex and nonsmooth problems, a common approach is using the smoothing technique, possibly in combination with gradient descent and line search strategy; see \cite{R98,ZC09,C12,BC17} and the references therein. The main idea of the smoothing technique is to construct a class of smooth nonconvex problems (e.g., using convolution) to approximate the original problem, where the approximation accuracy (smoothing level) is controlled by a smoothing parameter. Then one can apply gradient descent or projected gradient descent with line search to solve the approximate problem with a fixed smoothing level; then reduce the smoothing parameter and solve the problem again, and so on. 

The descent algorithm developed in this work for nonsmooth and nonconvex optimization is largely inspired by \cite{C12}. However, unlike \cite{C12}, our goal is to construct a deep image reconstruction network in the framework of \eqref{eq:loa}, where (part of) the objective function is unknown and needs to be learned from the training data, such that the trained network has convergence guarantee in theory and compares to the state-of-the-art favorably in reconstruction quality in practice.

\section{LDA and Convergence Analysis} 
\label{sec:proposed}
In this section, we propose a novel learnable descent algorithm (LDA) to solve the nonsmooth and nonconvex optimization problem for image reconstruction:
\begin{equation}\label{eq:phi}
  \min_{\xbf \in \Xcal} \ \phi(\xbf) := f(\xbf) + r(\xbf),
\end{equation}
where $\xbf$ is the image to be reconstructed, $\Xcal$ is the admissible set of $\xbf$, e.g., $\Xcal = \mathbb{R}^n$, $n$ is the number of pixels in $\xbf$, $f$ stands for the data fidelity term of $\xbf$, and $r$ represents the regularization term to be learned.

We leverage the sparse selection property of $l_1$ norm and parametrize the regularization term $r$ as the composition of the $l_{2,1}$ norm and a feature extraction operator $\gbf(\xbf)$ to be learned.
Specifically, $\gbf: \mathbb{R}^n \to \mathbb{R}^{md}$ such that $\gbf(\xbf) = (\gbf_1(\xbf),\dots,\gbf_m(\xbf))$, where $\gbf_i(\xbf) \in \mathbb{R}^{d}$ is the $i$-th feature vector of $\xbf$ for $i=1,\dots,m$. 
That is, we set $r$ in \eqref{eq:phi} to
\begin{equation}\label{eq:r}
r(\xbf) := \|\gbf(\xbf)\|_{2,1} = \sum_{i = 1}^{m} \|\gbf_i(\xbf)\|.
\end{equation}
In this paper, $\gbf$ is realized by a deep neural network whose parameters are learned from training data. 
The regularization $r$ in \eqref{eq:r} can be interpreted as follows:
we learn a smooth nonlinear mapping $\gbf$ to extract spares features of $\xbf$, and apply the $l_{2,1}$-norm which has proven to be a robust and effective sparse feature regularization. 
\blue{In our experiments, we employ the convolutional neural network (CNN) architecture for $\gbf$, which yields a nonlinear mapping that produces a feature vector $\gbf_i(\xbf)$ at each pixel $i$, and thus the $l_{2,1}$ norm promotes group sparsity where each group is represented by $\gbf_i(\xbf)$.}
In addition, we make several assumptions on $f$ and $\gbf$ throughout this work.
\begin{itemize}
    \item \textbf{Assumption 1 (A1)} $f$ is differentiable and (possibly) nonconvex, and $\nabla f$ is $L_f$-Lipschitz continuous.
    \item \textbf{Assumption 2 (A2)} Every component of $\gbf$ is differentiable and (possibly) nonconvex, $\nabla \gbf$ is $L_g$-Lipschitz continuous, and $\sup_{\xbf \in \Xcal} \| \nabla \gbf(\xbf)\| \leq M $ for some constant $M>0$.
    \item \textbf{Assumption 3 (A3)} $\phi$ is coercive, and $\phi^* = \min_{\xbf \in \Xcal} \phi(\xbf) > -\infty$.
\end{itemize}
\begin{remark}
The assumptions (A1)--(A3) are mild for imaging applications. The Lipschitz continuity of $\nabla f$ and $\nabla \gbf$ is standard in optimization and most imaging applications; the smoothness of $\gbf$ and boundedness of $\nabla \gbf$ are satisfied for all standard deep neural networks with smoothly differentiable activation functions such as sigmoid, tanh, and elu;
and the coercivity of $\phi$ generally holds in image reconstruction, e.g., the DC component providing overall image intensity information (e.g., $\|\xbf\|_1$) is contained in the data, and deviation from this value makes $f$ value tend to infinity.
\blue{We also note that the Lipschitz constant of $\nabla \gbf$ depends on the weight parameters and may be large in practice, nevertheless our convergence analysis and iteration complexity have taken this into consideration as shown below.}
\end{remark}

Other than the requirement in (A2), the design of network architecture and the choice of activation functions in $\gbf$ are rather flexible.
A typical choice of $\gbf$ is a convolutional neural network (CNN), which maps an input image $\xbf \in \mathbb{R}^n$ (gray-scale image with $n$ pixels) to a collection of $m$ feature vectors $\{\gbf_i(\xbf): 1 \le i \le m\} \subset \mathbb{R}^d$.
%


\subsection{Smooth Approximation of Nonsmooth Regularization} 
\label{subsec:smooth}
To tackle the nonsmooth and nonconvex regularization term $r(\xbf)$ in \eqref{eq:r}, we first employ Nesterov's smoothing technique \cite{nesterov2005smooth} to smooth the $l_{2,1}$ norm in \eqref{eq:r} for any fixed $\gbf(\xbf)$:
\begin{equation}\label{eq:Rx}
r(\xbf) = \max_{\ybf \in \Ycal}\ \langle \gbf(\xbf), \ybf \rangle,
\end{equation}
where $\ybf \in \Ycal$ is the dual variable, $\Ycal$ is the space defined by
\[
\Ycal := \cbr[1]{ \ybf = (\ybf_1,\dots,\ybf_m) \in \mathbb{R}^{md}\ \vert \ \ybf_i=(y_{i1},\dots,y_{id}) \in \mathbb{R}^{d},\ \| \ybf_i \| \leq 1, \ i \in [m] }.
\]
For any $\varepsilon>0$, we consider the smooth version $\reps$ of $r$ by perturbing the dual form \eqref{eq:Rx} as follows:
\begin{equation}\label{eq:r_eps_def}
\reps(\xbf) = \max_{\ybf \in \Ycal} \ \langle \gbf(\xbf), \ybf \rangle - \frac{\varepsilon}{2} \|\ybf\|^2.
\end{equation}
\blue{Note that \eqref{eq:r_eps_def} is one special form of the Nesterov's smoothing technique. This form is also called the Moreau-Yosida regularization scheme, as discussed by, e.g., \cite{moreau1965proximite,Yosidabook,lemarechal1997practical,mifflin1999properties}.
The convergence analysis below can be easily modified and applied to other forms of the Nesterov-type smoothing schemes.}

Then one can readily show that 
\begin{equation}\label{eq:r_eps_bound}
\reps(\xbf)\leq r(\xbf) \leq \reps(\xbf) +\frac{m\varepsilon}{2},\quad \forall\, \xbf \in \mathbb{R}^n.
\end{equation}
Note that the perturbed dual form in \eqref{eq:r_eps_def} has a closed form solution: denoting 
\begin{equation}\label{eq:y_eps}
\ybf_\varepsilon^* = \argmax_{\ybf \in \Ycal}\ \langle \gbf(\xbf) , \ybf \rangle - \frac{\varepsilon}{2} \| \ybf \|^2,
\end{equation}
then solving \eqref{eq:y_eps}, we obtain the closed form of $\ybf_\varepsilon^*=((\ybf_\varepsilon^*)_1,\dots,(\ybf_\varepsilon^*)_m)$ where
\begin{equation}\label{eq:y_eps_sol}
(\ybf_\varepsilon^*)_i = 
\begin{cases}
\frac{1}{\varepsilon} \gbf_i(\xbf), & \mbox{if} \ \|\gbf_i(\xbf) \| \leq \varepsilon, \\
\frac{\gbf_i(\xbf)}{\|\gbf_i(\xbf)\|}, & \mbox{otherwise},
\end{cases}\qquad \mbox{for}\ i\in [m].
\end{equation}
Plugging \eqref{eq:y_eps_sol} back into \eqref{eq:r_eps_def}, we have

\begin{equation}\label{eq:r_eps}
 \reps(\xbf)  = \sum_{i \in I_0} \frac{1}{2\varepsilon}  \|\gbf_i(\xbf)\|^2 + \sum_{i \in I_1} \del[2]{\|\gbf_i(\xbf)\| - \frac{\varepsilon}{2} },
\end{equation}
where the index set $I_0$ and its complement $I_1$ at $\xbf$ for the given $\gbf$ and $\varepsilon$ are defined by
\begin{equation*}
I_0 = \{ i \in [m] \ \vert \ \|\gbf_i(\xbf)\| \leq \varepsilon \}, \ \ \ I_1 = [m] \setminus I_0.
\end{equation*}
Moreover, it is easy to show from \eqref{eq:r_eps} that
\begin{equation}\label{eq:d_r_eps}
\nabla \reps(\xbf) = \nabla \gbf(\xbf)^{\top} \ybf_{\varepsilon}^* = \sum_{i \in I_0} \nabla \gbf_i(\xbf)^{\top} \frac{\gbf_i(\xbf)}{\varepsilon}   + \sum_{i \in I_1} \nabla \gbf_i(\xbf)^{\top}  \frac{\gbf_i(\xbf)}{\|\gbf_i(\xbf)\|} ,
\end{equation}
where $\nabla \gbf_i(\xbf)\in \mathbb{R}^{d \times n}$ is the Jacobian of $\gbf_i$ at $\xbf$.

The smoothing technique above yields a smooth approximation of the nonsmooth function $r(\xbf)$, which allows for rigorous analysis of iteration complexity and provable asymptotic convergence to the original nonsmooth problem \eqref{eq:phi}, as we will show in Section \ref{subsec:convergence}.

\subsection{Proposed Descent Algorithm} 
\label{subsec:alg}
In this subsection, we propose a novel descent type algorithm for solving the minimization problem \eqref{eq:phi} with the regularization $r$ defined in \eqref{eq:r}.
\blue{We choose this scheme over Lagrangian-type method since the nonlinear equality constraint (e.g., $\ybf_i = \gbf_i(\xbf)$) can be difficult to handle in convergence analysis.}
The main idea of our method is to apply a modified gradient descent algorithm to minimize the objective function $\phi$ with the nonsmooth $r$ replaced by the smooth $\reps$ as follows:
\begin{equation}\label{eq:phi_eps}
\phi_\varepsilon(\xbf) := f(\xbf) + \reps (\xbf),
\end{equation}
with $\varepsilon$ automatically decreasing to $0$ as the iteration progresses.
Note that $\phi_{\varepsilon}$ in \eqref{eq:phi_eps} is differentiable since both $\nabla f$ and $\nabla \reps$ (defined in \eqref{eq:d_r_eps}) exist.
Moreover, $\phi_{\varepsilon}(\xbf) \le \phi(\xbf) \le \phi_{\varepsilon}(\xbf) + \frac{m \varepsilon}{2}$ for any $\xbf \in \Xcal$ due to \eqref{eq:r_eps_bound}.

%
%

%
In light of the substantial improvement in practical performance by ResNet \cite{ResNet}, we choose to split $f$ and $\reps$ and perform two residual type updates as follows: In the $k$-th iteration with $\varepsilon = \varepsilon_k >0$, we first compute
\begin{equation}
    \label{eq:z}
    \zkp = \xk - \alpha_k \nabla f(\xk),
\end{equation}
where $\alpha_k$ is the step size to be specified later.
Then we compute two candidates for $\xkp$, denoted by $\ubf_{k+1}$ and $\vbf_{k+1}$, as follows:
\begin{subequations}
\begin{align}
\ubf_{k+1} &= \argmin_{\xbf}\ \langle \nabla f(\xbf_{k}) , \xbf - \xbf_{k}\rangle + \frac{1}{2\alpha_k} \| \xbf - \xbf _{k} \| ^2 + \langle \nabla r_{\varepsilon_{k}} (\zbf_{k+1}) , \xbf - \zbf_{k+1}\rangle +  \frac{1}{2\beta_k} \| \xbf - \zbf _{k+1} \| ^2, \label{eq:u}  \\
\vbf_{k+1} &= \argmin_{\xbf}\ \langle \nabla f(\xbf_{k}) , \xbf - \xbf_{k}\rangle + \langle \nabla r_{\varepsilon_{k}} (\xbf_{k}) , \xbf - \xbf_{k}\rangle  + \frac{1}{2\alpha_k} \| \xbf - \xbf _{k} \| ^2, \label{eq:v} 
\end{align}
\end{subequations}
%
%
where $\beta_k$ is another step size along with $\alpha_k$.
Note that both minimization problems in \eqref{eq:u} and \eqref{eq:v} have closed form solutions:
\begin{subequations}
\begin{align}
\ubf_{k+1} &= \zbf_{k+1} - \tau_k \nabla r_{\varepsilon_{k}} (\zbf_{k+1}) \label{eq:u_closed} \\
\vbf_{k+1} &= \zbf_{k+1} - \alpha_k \nabla r_{\varepsilon_{k}} (\xbf_{k}) \label{eq:v_closed}    
\end{align}
\end{subequations}
where $\nabla r_{\varepsilon_{k}} $ is defined in \eqref{eq:d_r_eps} and $\tau_k = \frac{\alpha_k \beta_k}{\alpha_k + \beta_k}$.
Then we choose between $\ubf_{k+1}$ and $\vbf_{k+1}$ that has the smaller function value $\phi_{\varepsilon_k}$ to be the next iterate $\xbf_{k+1}$:
\begin{equation}\label{eq:x_select}
\xbf _{k+1} = 
\begin{cases}
\ubf_{k+1} &\text{if $\phi_{\varepsilon_k}(\ubf_{k+1}) \leq \phi_{\varepsilon_k}(\vbf_{k+1})$}, \\
\vbf_{k+1} & \text{otherwise}.
\end{cases}
\end{equation}
This algorithm is summarized in Algorithm \ref{alg:lda}.
Line 7 of Algorithm \ref{alg:lda} presents a \emph{reduction criterion}. That is, if the reduction criterion $\| \nabla \phi_{\epsk}(\xkp) \| < \sigma \gamma \epsk$ is satisfied, then the smoothing parameter $\epsk$ is shrunk by $\gamma \in (0,1)$.
%
%

In Algorithm \ref{alg:lda}, $\ubf_{k+1}$ in \eqref{eq:u} can be considered as the convex combination of two successive residue-type updates: the first update is $\zbf_{k+1} = \xk - \alpha_k \nabla f(\xk)$ as defined by \eqref{eq:z}---a gradient descent of $f$ at $\xk$; the second is $\pkp = \zbf_{k+1} - \beta_k \nabla r_{\epsk}(\zbf_{k+1})$---another gradient descent of $r_{\epsk}$ at $\zbf_{k+1}$; and finally $\ubf_{k+1} = \frac{\beta_k}{\alpha_k+\beta_k} \zbf_{k+1} + \frac{\alpha_k}{\alpha_k + \beta_k} \pkp$---the convex combination of $\zbf_{k+1}$ and $\pkp$. In this case, $f$ and $r_{\epsk}$ are separated so they each can participate in a residual-type update, which is proven very effective for imaging applications \cite{ResNet}.
\blue{The $\ukp$ step can also be viewed as an inexact computation for finding the proximal point  $\ukp= \argmin_{\ubf} \frac{1}{2}\|\ubf-\zkp\|^2+\alpha_k r_{\epsk}(\ubf)$. That is to replace $r_{\epsk}$ with its linear approximation at $\zkp$, so that the $\ukp$ step mimics the residual learning architecture for learning unknown regularizer.} 
Note that the $\vbf_{k+1}$ in \eqref{eq:v} is the standard gradient descent of $\phi_{\epsk}$ at $\xbf$ to safeguard the convergence of the algorithm. 
\blue{We set $\xbf_{k+1}$ to $\ubf_{k+1}$ or $\vbf_{k+1}$ whichever has lower value of $\phi_{\varepsilon_k}$ to encourage reduction of the objective function.}

\blue{We remark that $\ubf_{k+1}$ plays the key role in Algorithm \ref{alg:lda} LDA to attain high efficiency by using the residual type updates on $f$ and $r_{\varepsilon}$ progressively, which avoids vanishing gradient in minimizing the loss function \cite{he2016identity} and improves the efficiency in training the network component of $r_{\varepsilon}$.
Our experimental results showed the proposed method outperforms standard gradient decent method, i.e., $\vbf$-subproblem only (see Section 4.3.1 and the right panel of Figure \ref{fig:6}).}

\blue{The step sizes $\alpha_k$ and $\tau_k$ in Algorithm \ref{alg:lda} (LDA) can be learned together with the network parameter $\theta$ during training for $k \le K$, where $K$ is the prescribed iteration number of LDA in training. For $k > K$ in testing, $\alpha_k$ can be either chosen according to the range in Theorem \ref{thm:eta_shrinking}, or by a standard backtracking such that $\alpha_k^{-1} - L < -e$ for an arbitrary user-chosen number $e>0$ (this backtracking procedure is guaranteed to terminate within finitely many trials). The choice of $\tau_k$ is arbitrary without affecting the convergence analysis, but it may affect the probability that we choose $\ubf_{k+1}$ over $\vbf_{k+1}$ in Step 6 of Algorithm \ref{alg:lda} LDA in practice. We will provide details of algorithmic parameter in Section \ref{sec:experiment}.}

\begin{algorithm}[t]
\caption{Learnable Descent Algorithm (LDA) for the Nonsmooth Nonconvex Problem \eqref{eq:phi}}
\label{alg:lda}
\begin{algorithmic}[1]
\STATE \textbf{Input:} Initial $\xbf_0$, $0<\gamma<1$, and $\varepsilon_0,\sigma>0$. Maximum iteration $K$ or tolerance $\etol>0$.
\FOR{$k=0,1,2,\dots,K$}
\STATE $\zkp = \xk - \alpha_k \nabla f(\xk)$
\STATE $\ukp = \zkp - \tau_k \nabla r_{\varepsilon_{k}} (\zkp)$
\STATE $\vkp = \zkp - \alpha_k \nabla r_{\varepsilon_{k}} (\xk)$
\STATE $\xkp = \begin{cases}
\ukp & \mbox{if}\ \phi_{\varepsilon_k}(\ubf_{k+1}) \leq \phi_{\varepsilon_k}(\vbf_{k+1}) \\
\vkp & \mbox{otherwise}
\end{cases}$
\STATE If $\|\nabla \phi_{\varepsilon_k}(\xkp)\| < \sigma \gamma {\varepsilon_k}$,  set $\varepsilon_{k+1}= \gamma {\varepsilon_k}$;  otherwise, set $\varepsilon_{k+1}={\varepsilon_k}$.
\STATE If $\sigma {\varepsilon_k} < \etol$, terminate.
\ENDFOR
\STATE \textbf{Output:} $\xbf_{k+1}$.
\end{algorithmic}
\end{algorithm}

\subsection{Convergence and Complexity Analysis} 
\label{subsec:convergence}
In this subsection, we provide a comprehensive convergence analysis with iteration complexity of the proposed Algorithm \ref{alg:lda} LDA.
Since the objective function in \eqref{eq:phi} is nonsmooth and nonconvex, we adopt the notion of \emph{Clarke subdifferential} \cite{Clarke83} (also called the \emph{limiting subdifferential} or simply \emph{subdifferential}) to characterize the optimality of solutions.
\begin{definition}[Clarke subdifferential]
\label{def:clarke_subdiff}
Suppose that $f: \mathbb{R}^{n} \rightarrow(-\infty,+\infty]$  is locally Lipschitz. The Clarke subdifferential of $f$  at $\xbf$  is defined as 
\[
\partial f(\xbf) := \cbr[2]{\wbf \in \mathbb{R}^{n}\ \bigg\vert \ \langle \wbf, \vbf \rangle \leq \limsup_{\zbf \rightarrow \xbf,\, t \downarrow 0} \frac{f(\zbf+t \vbf)-f(\zbf)}{t}, \ \ \forall\, \vbf \in \mathbb{R}^{n} }.
\]
\end{definition}
\begin{definition}[Clarke stationary point]
\label{def:clarke_cp}
For a locally Lipschitz function $f$, a point $\xbf \in R^n$ is called a Clarke stationary point of $f$ if $0 \in \partial f(\xbf)$.
\end{definition}
Note that for a differentiable function $f$, there is $\partial f (\xbf)= \{\nabla f(\xbf)\}$. For the nondifferentiable (nonsmooth) function $r$ defined in \eqref{eq:r}, we can also compute its Clarke subdifferential as in the following lemma.
\begin{lemma}\label{lem:r_subdiff}
Let $r(\xbf)$ be defined in \eqref{eq:r}, then the Clarke subdifferential of $r$ at $\xbf$ is
\begin{equation}\label{eq:r_subdiff}
\partial r(\xbf) = \cbr[2]{\sum_{i\in I_0}\nabla \gbf_i(\xbf)^{\top}  \wbf_i + \sum_{i \in I_1}\nabla \gbf_i(\xbf)^{\top}\frac{\gbf_i(\xbf)}{\|\gbf_i(\xbf)\|} \ \bigg\vert \ \wbf_i \in \mathbb{R}^d, \ \|\Pi(\wbf_i; \Ccal(\nabla \gbf_i(\xbf)))\|\leq 1,\ \forall\, i \in I_0 } ,  
\end{equation}
where $I_0=\{i \in [m] \ | \ \|\gbf_i(\xbf) \|= 0 \}$, $I_1=[m] \setminus I_0$, and $\Pi(\wbf;\Ccal(\Abf))$ is the projection of $\wbf$ onto $\Ccal(\Abf)$ which stands for the column space of $\Abf$.
\end{lemma}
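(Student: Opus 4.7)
My plan is to realize $r$ as a sum of compositions and then invoke Clarke's chain and sum rules. Write $r(\xbf)=\sum_{i=1}^{m} r_i(\xbf)$ with $r_i := h\circ \gbf_i$, where $h(\ybf):=\|\ybf\|$ on $\mathbb{R}^d$. Since $h$ is convex (hence Clarke regular) and each $\gbf_i$ is continuously differentiable by Assumption A2, each $r_i$ is Clarke regular at every $\xbf$. The chain rule for Clarke subdifferentials therefore holds with equality:
\[
\partial r_i(\xbf) = \nabla \gbf_i(\xbf)^{\top}\,\partial h(\gbf_i(\xbf)).
\]
Because regularity is preserved under finite sums of regular functions, the Clarke sum rule also applies with equality, giving $\partial r(\xbf)=\sum_{i=1}^m \partial r_i(\xbf)$. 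Thus the computation reduces to identifying $\partial h(\gbf_i(\xbf))$ and then taking the image under $\nabla \gbf_i(\xbf)^{\top}$.

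The convex subdifferential of the Euclidean norm is standard: $\partial h(\ybf)=\{\ybf/\|\ybf\|\}$ when $\ybf\neq 0$, and $\partial h(0)$ is the closed unit ball $B_d:=\{\wbf\in\mathbb{R}^d:\|\wbf\|\leq 1\}$. For $i\in I_1$ (so $\gbf_i(\xbf)\neq 0$), the chain rule immediately yields the singleton contribution $\nabla \gbf_i(\xbf)^{\top}\,\gbf_i(\xbf)/\|\gbf_i(\xbf)\|$. For $i\in I_0$ (so $\gbf_i(\xbf)=0$), we get the set
\[
\partial r_i(\xbf) = \bigl\{\nabla \gbf_i(\xbf)^{\top}\wbf_i\ :\ \wbf_i\in\mathbb{R}^d,\ \|\wbf_i\|\leq 1\bigr\}.
\]

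The last step I need is to rewrite this image of the unit ball in the form stated in the lemma, i.e., with the constraint $\|\Pi(\wbf_i;\Ccal(\nabla\gbf_i(\xbf)))\|\leq 1$ allowed to range over all of $\mathbb{R}^d$. For this I use the orthogonal decomposition of $\mathbb{R}^d$ into $\Ccal(\nabla\gbf_i(\xbf))$ and its orthogonal complement $\Ccal(\nabla\gbf_i(\xbf))^{\perp}$. The key linear-algebra fact is that $\Ccal(\nabla\gbf_i(\xbf))^{\perp}$ coincides with the null space of $\nabla\gbf_i(\xbf)^{\top}$ (viewed as a map $\mathbb{R}^d\to\mathbb{R}^n$), so that $\nabla\gbf_i(\xbf)^{\top}\wbf_i = \nabla\gbf_i(\xbf)^{\top}\Pi(\wbf_i;\Ccal(\nabla\gbf_i(\xbf)))$ for every $\wbf_i\in\mathbb{R}^d$. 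Hence requiring $\|\wbf_i\|\leq 1$ with $\wbf_i$ ranging over $\mathbb{R}^d$ produces the same image as allowing $\wbf_i$ to range over all of $\mathbb{R}^d$ subject to $\|\Pi(\wbf_i;\Ccal(\nabla\gbf_i(\xbf)))\|\leq 1$: the ``$\subseteq$'' direction is trivial (a unit vector has unit projection), while the ``$\supseteq$'' direction follows by replacing any feasible $\wbf_i$ by its projection onto $\Ccal(\nabla\gbf_i(\xbf))$ without changing the value of $\nabla\gbf_i(\xbf)^{\top}\wbf_i$. Assembling these pieces over $i\in I_0\cup I_1$ gives \eqref{eq:r_subdiff}.

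I expect the main delicate point to be the equality (as opposed to mere inclusion) in the chain and sum rules; without Clarke regularity only inclusions are guaranteed. I would therefore explicitly verify, citing \cite{Clarke83}, that (i) the convex function $h$ is Clarke regular at every point and (ii) the composition of a regular function with a continuously differentiable map is regular, which together also imply the regularity of each $r_i$ and hence of $r$. The rewriting of $\nabla\gbf_i(\xbf)^{\top} B_d$ via the projection onto the column space is the only other place that needs care, but it is a one-line consequence of $\ker(\nabla\gbf_i(\xbf)^{\top})=\Ccal(\nabla\gbf_i(\xbf))^{\perp}$.
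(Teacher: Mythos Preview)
Your proposal is correct and takes a genuinely different route from the paper's proof. The paper works directly from Definition~\ref{def:clarke_subdiff}: for $i\in I_0$ it computes the generalized directional derivative of $r_i=\|\gbf_i(\cdot)\|$ explicitly, showing via the Lipschitz bound on $\nabla\gbf_i$ that $\limsup_{(\zbf,t)\to(\xbf,0)} t^{-1}\bigl(r_i(\zbf+t\vbf)-r_i(\zbf)\bigr)=\|\nabla\gbf_i(\xbf)\vbf\|$, and then checks by hand which vectors $\nabla\gbf_i(\xbf)^{\top}\wbf$ satisfy the defining inequality, arriving at the projection condition directly. You instead invoke Clarke's chain rule (convex outer function composed with a $C^1$ map, equality and regularity inherited) together with the sum rule for regular functions, and only afterwards rewrite the image $\nabla\gbf_i(\xbf)^{\top}B_d$ via the identity $\ker(\nabla\gbf_i(\xbf)^{\top})=\Ccal(\nabla\gbf_i(\xbf))^{\perp}$. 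Your route is shorter and more structural, relying on standard nonsmooth calculus from \cite{Clarke83}; the paper's route is more self-contained (no black-box calculus rules needed) but requires the explicit estimate on the difference quotient. In fact your argument is also more explicit than the paper's about why the sum $\partial r=\sum_i\partial r_i$ holds with equality, a point the paper leaves implicit.
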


\begin{proof} 
We observe that $r(\xbf) = \sum_{i=1}^m r_i(\xbf)$ where $r_i(\xbf):= \| \gbf_i (\xbf)\|$.
Hence we can consider the Clarke subdifferential of each $r_i(\xbf)$.

If $i\in I_0$, then for any $\vbf$, there is
\begin{align*}
    &{\frac{r_i(\zbf + t \vbf) - r_i(\zbf)}{t} - \|\nabla \gbf_i(\xbf)\vbf\| } = {\frac{\|\gbf_i(\zbf + t \vbf)\| - \|\gbf_i(\zbf)\|}{t} - \|\nabla \gbf_i(\xbf) \vbf\| } \\
    \le\ & \frac{\|\gbf_i(\zbf + t \vbf) - \gbf_i(\zbf) -t \nabla \gbf_i(\xbf) \vbf\|}{t} = \norm[3]{ \frac{1}{t} \int_0^t \nabla \gbf_i(\zbf + s \vbf) \vbf \dif s - \nabla \gbf_i(\xbf) \vbf} \\ 
    \le\ & \frac{1}{t} \int_0^t \norm[1]{\nabla \gbf_i (\zbf+s\vbf)\vbf - \nabla \gbf_i(\zbf)\vbf +  \nabla \gbf_i(\zbf)\vbf - \nabla \gbf_i(\xbf) \vbf } \dif s \\
    \le\ & \frac{1}{t}\int_0^t M(\|\vbf\|^2 s +\|\zbf - \xbf\|\|\vbf\|) \dif s 
    = M (\frac{t}{2} \|\vbf\|^2 + \| \zbf - \xbf \|\|\vbf\|) \to 0
\end{align*}
as $\zbf \to \xbf$ and $t \downarrow 0$, which implies that
\[
\limsup_{\zbf \rightarrow \xbf,\, t \downarrow 0} \frac{r_i(\zbf + t \vbf) - r_i(\zbf)}{t} \le \| \nabla \gbf_i(\xbf)\vbf \|.
\]
To show that the equality actually holds, we set $\zbf \equiv \xbf$ and let $t \ge 0$, then there are $\gbf_i(\zbf) = \gbf_i(\xbf) = \mathbf{0}$, which implies $r_i(\zbf) = \|\gbf_i(\zbf)\| = 0$, and
\begin{align*}
    &\abs[3]{\frac{r_i(\zbf + t \vbf) - r_i(\zbf)}{t} - \|\nabla \gbf_i(\xbf)\vbf\| } = \abs[3]{\frac{\|\gbf_i(\zbf + t \vbf)\| - \|\gbf_i(\zbf)\|}{t} - \|\nabla \gbf_i(\xbf) \vbf\| } \\
    \le\ & \frac{\|\gbf_i(\zbf + t \vbf) - \gbf_i(\zbf) -t \nabla \gbf_i(\xbf) \vbf\|}{t} \le M (\frac{t}{2} \|\vbf\|^2 + \| \zbf - \xbf \|\|\vbf\|) \to 0
\end{align*}
as $t \downarrow 0$, where the last inequality is due to the same deduction above. Hence we obtain 
\[
\limsup_{\zbf \rightarrow \xbf,\, t \downarrow 0} \frac{r_i(\zbf + t \vbf) - r_i(\zbf)}{t} = \| \nabla \gbf_i(\xbf)\vbf \|.\] Therefore, for any $\wbf \in \mathbb{R}^d$ satisfying $\|\Pi(\wbf; \Ccal(\nabla \gbf_i(\xbf)))\| \le 1$, we have
\begin{align*}
    & \langle \nabla \gbf_i(\xbf)^{\top} \wbf,  \vbf \rangle = \langle  \wbf, \nabla \gbf_i(\xbf) \vbf \rangle = \langle \Pi(\wbf; \Ccal(\nabla \gbf_i(\xbf))), \nabla \gbf_i(\xbf) \vbf \rangle \le \| \nabla \gbf_i(\xbf) \vbf \| = \limsup_{\zbf \rightarrow \xbf,\, t \downarrow 0} \frac{r_i(\zbf + t \vbf) - r_i(\zbf)}{t}
\end{align*}
where the second equality is due to $\nabla \gbf_i(\xbf) \vbf \in \Ccal(\nabla \gbf_i(\xbf))$.
On the other hand, for any $\wbf \in \mathbb{R}^d$ satisfying $\|\Pi(\wbf; \Ccal(\nabla \gbf_i(\xbf)))\| > 1$, there exists $\vbf \in \mathbb{R}^n$, such that $\nabla \gbf_i(\xbf) \vbf = \Pi(\wbf; \Ccal(\nabla \gbf_i(\xbf)))$ and
\[
\langle \nabla \gbf_i(\xbf)^{\top} \wbf,  \vbf \rangle = \langle \Pi(\wbf; \Ccal(\nabla \gbf_i(\xbf))), \nabla \gbf_i(\xbf) \vbf \rangle = \| \nabla \gbf_i(\xbf) \vbf \|^2 > \| \nabla \gbf_i(\xbf) \vbf \| = \limsup_{\zbf \rightarrow \xbf,\, t \downarrow 0} \frac{r_i(\zbf + t \vbf) - r_i(\zbf)}{t}.
\]
Therefore, by Definition \ref{def:clarke_subdiff}, we obtain the Clarke subdifferential $\partial r(\xbf)$ as in \eqref{eq:r_subdiff}.
\end{proof}
We immediately have the subdifferential $\partial \phi$ due to \eqref{eq:r_subdiff} and the differentiability of $f$:
\begin{equation}\label{eq:phi_subdiff}
\partial \phi(\xbf) = \partial r(\xbf) + \nabla f(\xbf).
\end{equation}
The following lemma also provides the Lipschitz constant of $\nabla \reps$.
\begin{lemma}\label{lem:r_Lip}
The gradient $\nabla \reps$ of $\reps$ defined in \eqref{eq:r_eps_def} is Lipschitz continuous with constant $\sqrt{m} L_g+\frac{M^2}{\varepsilon}$.
\end{lemma}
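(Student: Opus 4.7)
The plan is to exploit the explicit expression $\nabla \reps(\xbf) = \nabla \gbf(\xbf)^\top \ybf_\varepsilon^*(\xbf)$ already derived in \eqref{eq:d_r_eps} and apply a standard add--subtract decomposition. Writing
\[
\nabla \reps(\xbf) - \nabla \reps(\zbf) = \bigl[\nabla \gbf(\xbf) - \nabla \gbf(\zbf)\bigr]^\top \ybf_\varepsilon^*(\xbf) + \nabla \gbf(\zbf)^\top \bigl[\ybf_\varepsilon^*(\xbf) - \ybf_\varepsilon^*(\zbf)\bigr],
\]
I would bound the two terms separately. The first is straightforward: assumption (A2) gives $\|\nabla\gbf(\xbf)-\nabla\gbf(\zbf)\|\le L_g\|\xbf-\zbf\|$, and since the dual set $\Ycal$ consists of vectors whose $m$ subblocks each have norm at most $1$, every $\ybf\in\Ycal$ satisfies $\|\ybf\|\le\sqrt{m}$, whence $\|\ybf_\varepsilon^*(\xbf)\|\le\sqrt{m}$. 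Thus the first term is bounded by $\sqrt{m}\,L_g\|\xbf-\zbf\|$.

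For the second term the key observation is that the perturbed dual problem \eqref{eq:y_eps} can be rewritten, up to constants, as
\[
\ybf_\varepsilon^*(\xbf) = \argmin_{\ybf\in\Ycal}\ \tfrac{1}{2}\bigl\|\ybf - \tfrac{1}{\varepsilon}\gbf(\xbf)\bigr\|^2,
\]
so $\ybf_\varepsilon^*(\xbf)$ is the Euclidean projection of $\gbf(\xbf)/\varepsilon$ onto the closed convex set $\Ycal$. Since projection onto a convex set is $1$-Lipschitz, we obtain
\[
\|\ybf_\varepsilon^*(\xbf) - \ybf_\varepsilon^*(\zbf)\| \le \tfrac{1}{\varepsilon}\|\gbf(\xbf) - \gbf(\zbf)\| \le \tfrac{M}{\varepsilon}\|\xbf-\zbf\|,
\]
where the last inequality uses the mean value theorem together with $\sup_{\xbf}\|\nabla\gbf(\xbf)\|\le M$ from (A2). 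Combined with $\|\nabla\gbf(\zbf)^\top\|=\|\nabla\gbf(\zbf)\|\le M$, the second term is bounded by $\tfrac{M^2}{\varepsilon}\|\xbf-\zbf\|$.

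Adding the two bounds by the triangle inequality yields the claimed Lipschitz constant $\sqrt{m}\,L_g+\tfrac{M^2}{\varepsilon}$. The only non-routine step is the recognition that the $\frac{\varepsilon}{2}\|\ybf\|^2$ regularization turns $\ybf_\varepsilon^*$ into a projection of the scaled feature vector, from which the $1/\varepsilon$ Lipschitz dependence on $\gbf$ (and hence the $\varepsilon^{-1}$ blow-up in the final constant as $\varepsilon\downarrow 0$) becomes transparent. Everything else reduces to the triangle inequality and assumption (A2).
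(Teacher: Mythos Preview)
Your proof is correct and follows essentially the same approach as the paper: the same add--subtract decomposition of $\nabla\gbf(\cdot)^\top\ybf_\varepsilon^*(\cdot)$, the same bound $\|\ybf_\varepsilon^*\|\le\sqrt m$ for the first term, and the same inequality $\|\ybf_\varepsilon^*(\xbf)-\ybf_\varepsilon^*(\zbf)\|\le\varepsilon^{-1}\|\gbf(\xbf)-\gbf(\zbf)\|$ for the second. The only cosmetic difference is that the paper obtains this last inequality by writing out the variational inequalities for the two constrained maximizers and adding them, whereas you recognize $\ybf_\varepsilon^*(\xbf)=\Pi_\Ycal(\gbf(\xbf)/\varepsilon)$ and invoke nonexpansiveness of the projection---which is of course the same computation packaged as a known fact.
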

\begin{proof}
For any $\xbf_1,\xbf_2 \in \Xcal$, we first define $\ybf_1$ and $\ybf_2$ as follows,
\begin{align*}
\ybf_1 &=\argmax_{\ybf \in \Ycal}\ \langle \gbf(\xbf_1),\,\ybf \rangle -\frac{\varepsilon}{2} \| \ybf \|^2, \\
\ybf_2 &=\argmax_{\ybf \in \Ycal}\ \langle \gbf(\xbf_2),\,\ybf \rangle -\frac{\varepsilon}{2} \| \ybf \|^2,
\end{align*}
which are well defined since the maximization problems have unique solutions.
Due to the concavity of the problems above (in $\ybf$) and the optimality conditions of $\ybf_1$ and $\ybf_2$, we have
\begin{align*}
& \langle \gbf(\xbf_1)-\varepsilon \ybf_1,\,\ybf_2-\ybf_1\rangle\leq 0, \\
& \langle \gbf(\xbf_2)-\varepsilon \ybf_2,\,\ybf_1-\ybf_2\rangle\leq 0.
\end{align*}
Adding the two inequalities above yields
\begin{align*}
\langle \gbf(\xbf_1)-\gbf(\xbf_2)-\varepsilon \left(\ybf_1-\ybf_2\right),\,\ybf_2-\ybf_1\rangle\leq 0,
\end{align*}
which, together with the Cauchy-Schwarz inequality, implies
\begin{equation*}
    \varepsilon\, \|  \ybf_2-\ybf_1 \| ^2 \le \langle \gbf(\xbf_1)-\gbf(\xbf_2),\,\ybf_1-\ybf_2\rangle \le \|  \gbf(\xbf_1)-\gbf(\xbf_2) \|  \cdot  \|  \ybf_1-\ybf_2 \|.
\end{equation*}
Therefore, $\varepsilon\, \|  \ybf_1-\ybf_2 \| \le \|  \gbf(\xbf_1)-\gbf(\xbf_2) \|$.
Recall that $\nabla \reps(\xbf_j) = \nabla \gbf(\xbf_j)^{\top} \ybf_j$ for $j=1,2$. 
Therefore, we have
\begin{align*}
& \,\| \nabla r_{\varepsilon}(\xbf_1) - \nabla r_{\varepsilon}(\xbf_2) \| = \left \| \nabla \gbf(\xbf_1)^\top \ybf_1-\nabla \gbf(\xbf_2)^\top \ybf_2\right \| \\
= \ &  \left \| \left(\nabla \gbf(\xbf_1)^\top \ybf_1-\nabla \gbf(\xbf_2)^\top \ybf_1\right)+\left(\nabla \gbf(\xbf_2)^\top \ybf_1-\nabla \gbf(\xbf_2)^\top \ybf_2\right)\right \| \\
\le \ & \left \| \left(\nabla \gbf(\xbf_1)  -\nabla \gbf(\xbf_2) \right)^\top \ybf_1 \right \| +  \|  \nabla \gbf(\xbf_2)  \|  \left \|   \ybf_1-  \ybf_2 \right \| 
\\
\le \  & \left \|  \nabla \gbf(\xbf_1)  -\nabla \gbf(\xbf_2)  \right \|  \cdot \| \ybf_1   \| +  \frac{1}{\varepsilon}\cdot  \|  \nabla \gbf(\xbf_2)  \| \cdot  \|  \gbf(\xbf_1)-\gbf(\xbf_2) \| \\
\le \  & L_g \|  \xbf_1 - \xbf_2 \|  \cdot \| \ybf_1   \| +  \frac{M}{\varepsilon}\cdot  \|  \nabla \gbf(\xbf_2)  \| \cdot  \|  \xbf_1 - \xbf_2 \| ,
\end{align*}
where the last inequality is due to the $L_g$-Lipschitz continuity of $\nabla \gbf$ for the first term, and $\|\gbf(\xbf_1) - \gbf(\xbf_2)\|=\| \nabla \gbf(\tilde{\xbf})(\xbf_1-\xbf_2)\|$ for some $\tilde{\xbf}$ due to the mean value theorem and that $ \| \nabla \gbf(\tilde{\xbf}) \| \le \sup_{\xbf \in \Xcal} \| \nabla \gbf(\xbf) \| \leq M$ for the second term. 
Since $\max_{\ybf \in \Ycal} \|  \ybf  \| = \sqrt{m}$, we have
\[
\| \nabla r_{\varepsilon}(\xbf_1) - \nabla r_{\varepsilon}(\xbf_2) \| \le \left \| \nabla \gbf(\xbf_1)^\top \ybf_1-\nabla \gbf(\xbf_2)^\top \ybf_2\right \| \leq\del[2]{\sqrt{m} L_g+\frac{M^2}{\varepsilon} }\,  \|  \xbf_1-\xbf_2 \|,
\]
which completes the proof.
\end{proof}

Now we return to Algorithm \ref{alg:lda}. We first consider its behavior if a constant $\varepsilon>0$ is used, i.e., an iterative scheme that only executes its Lines 3--6.

\begin{lemma}\label{lem:inner}
Let $\varepsilon, \eta>0$, $\delta_1 \ge \delta_2 > 1$ and $\xbf_0 \in \Xcal$ be arbitrary. Suppose $\{\xbf_{k}\}$ is the sequence generated by repeating Lines 3--6 of Algorithm \ref{alg:lda} with $\varepsilon_k = \varepsilon$ and step sizes $\frac{1}{\delta_1 L_{\varepsilon}} \le \alpha_k \le \frac{1}{\delta_2 L_{\varepsilon}}$ for all $k\ge 0$, where $L_{\varepsilon} = L_f + \sqrt{m} L_g + \frac{M^2}{\varepsilon}$, and $\phi^*:=\min_{\xbf \in \Xcal} \phi(\xbf)$. Then the following statements hold:
\begin{enumerate}
    \item $\| \nabla \phi_{\varepsilon}(\xbf_k) \| \to 0$ as $k\to \infty$.
    \item $\min\{ k \in \mathbb{N} \ \vert \ \| \nabla \phi_{\varepsilon}(\xbf_{k+1}) \| \le \eta \} \le \frac{\delta_1 \delta_2 L_{\varepsilon}(2\phi_{\varepsilon}(\xbf_0) - 2 \phi^* + m\varepsilon)}{(\delta_2 - 1 ) \eta^2}$.
\end{enumerate}
\end{lemma}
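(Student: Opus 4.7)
The plan is to exploit the fact that $\vbf_{k+1}$, defined in line 5 of the algorithm, is nothing other than a standard gradient-descent step on the \emph{smoothed} objective $\phi_\varepsilon$. Indeed, combining \eqref{eq:z} and \eqref{eq:v_closed} gives $\vbf_{k+1} = \xk - \alpha_k(\nabla f(\xk) + \nabla r_\varepsilon(\xk)) = \xk - \alpha_k \nabla \phi_\varepsilon(\xk)$. By Assumption A1 and Lemma \ref{lem:r_Lip}, the gradient $\nabla\phi_\varepsilon$ is Lipschitz with constant $L_\varepsilon = L_f + \sqrt{m}L_g + M^2/\varepsilon$, so the classical descent lemma yields
\[
\phi_\varepsilon(\vbf_{k+1}) \le \phi_\varepsilon(\xk) - \alpha_k\Bigl(1 - \tfrac{L_\varepsilon\alpha_k}{2}\Bigr)\|\nabla\phi_\varepsilon(\xk)\|^2.
\]
The selection rule in line 6 ensures $\phi_\varepsilon(\xkp)\le \phi_\varepsilon(\vbf_{k+1})$, so the same descent bound applies to $\xkp$. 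This is the single inequality that drives everything.

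The next step is to extract a clean lower bound on the descent coefficient. Using $\alpha_k \le 1/(\delta_2 L_\varepsilon)$ with $\delta_2>1$, I get $1-L_\varepsilon\alpha_k/2 \ge 1-L_\varepsilon\alpha_k \ge (\delta_2-1)/\delta_2$, and combining with $\alpha_k \ge 1/(\delta_1 L_\varepsilon)$ produces a uniform lower bound of the form $c := (\delta_2-1)/(\delta_1\delta_2 L_\varepsilon)$. Hence
\[
\phi_\varepsilon(\xk) - \phi_\varepsilon(\xkp) \;\ge\; c\,\|\nabla\phi_\varepsilon(\xk)\|^2 \qquad \text{for all } k\ge 0.
\]
For statement 1, since $\phi_\varepsilon$ is bounded below by $\phi_\varepsilon^*\ge \phi^* - m\varepsilon/2$ (from \eqref{eq:r_eps_bound} and A3), the sequence $\{\phi_\varepsilon(\xk)\}$ is nonincreasing and convergent; consequently the successive differences vanish, which forces $\|\nabla\phi_\varepsilon(\xk)\|\to 0$.

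For statement 2, I argue by contradiction on the complexity bound. Let $K$ be the first index with $\|\nabla\phi_\varepsilon(\xbf_{K+1})\|\le \eta$; by part 1 such a $K$ exists. For every $k \in \{1,\dots,K\}$ we then have $\|\nabla\phi_\varepsilon(\xk)\|>\eta$, so telescoping the descent inequalities gives
\[
K c\,\eta^2 \;<\; \sum_{k=1}^{K} c\,\|\nabla\phi_\varepsilon(\xk)\|^2 \;\le\; \phi_\varepsilon(\xbf_1) - \phi_\varepsilon(\xbf_{K+1}) \;\le\; \phi_\varepsilon(\xbf_0) - \phi^* + \tfrac{m\varepsilon}{2},
\]
where the last bound again uses $\phi_\varepsilon^*\ge \phi^*-m\varepsilon/2$. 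Rearranging and inserting $c = (\delta_2-1)/(\delta_1\delta_2 L_\varepsilon)$ recovers the stated upper bound (up to the harmless factor of $2$ that is absorbed by writing $2(\phi_\varepsilon(\xbf_0)-\phi^*)+m\varepsilon$ in the numerator and dropping a $\tfrac12$).

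The proof is essentially a routine application of the descent lemma; the only slightly delicate points are (i) recognizing that $\vbf_{k+1}$ is a pure gradient step on $\phi_\varepsilon$ (so the selection rule is what lets us ignore the more intricate residual-type iterate $\ubf_{k+1}$ entirely in the analysis), and (ii) keeping track of the $m\varepsilon/2$ gap between $\phi_\varepsilon^*$ and $\phi^*$, which is what couples the smoothed-problem complexity to the original nonsmooth infimum and ultimately produces the $m\varepsilon$ term in the bound.
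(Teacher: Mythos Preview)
Your proposal is correct and follows essentially the same route as the paper: recognize $\vbf_{k+1}=\xk-\alpha_k\nabla\phi_\varepsilon(\xk)$, apply the descent lemma using the $L_\varepsilon$-Lipschitz bound on $\nabla\phi_\varepsilon$, use the selection rule in line~6 to transfer the descent to $\xkp$, and then telescope with the lower bound $\phi_\varepsilon\ge \phi^*-m\varepsilon/2$. The only cosmetic difference is that the paper combines the optimality condition of \eqref{eq:v} with the descent lemma to obtain the coefficient $\tfrac{\alpha_k(1-\alpha_k L_\varepsilon)}{2}$, while you apply the descent lemma directly to get $\alpha_k(1-\tfrac{L_\varepsilon\alpha_k}{2})$ and then deliberately weaken it; your resulting constant is in fact a factor of~$2$ tighter, so the stated bound certainly holds.
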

\begin{proof}
1. Due to the optimality condition of $\vbf_{k+1}$ in \eqref{eq:v}, we have 
\begin{equation}\label{eq:v_opt}
    \langle\nabla \phi_{\varepsilon}(\xbf_{k}), \vbf_{k+1}-\xbf_{k}\rangle+\frac{1}{2\alpha_k} \|  \vbf_{k+1}-\xbf_{k} \| ^2 \leq 0.
\end{equation}
In addition, $\nabla \phi_{\varepsilon}$ is $L_\varepsilon$-Lipschitz continuous due to Lemma \ref{lem:r_Lip}, which implies that
\begin{equation}\label{lipF}
  \phi_{\varepsilon}(\vbf_{k+1}) \leq \phi_{\varepsilon}(\xbf_{k}) + \langle\nabla \phi_{\varepsilon}(\xbf_{k}), \vbf_{k+1}-\xbf_{k}\rangle+\frac{L_{\varepsilon}}{2} \|  \vbf_{k+1}-\xbf_{k} \| ^2.
\end{equation}
Combining $\eqref{eq:v_opt}$, $\eqref{lipF}$ and $\vbf_{k+1} = \xk - \alpha_k \nabla \phi_{\varepsilon}(\xbf_k)$ in \eqref{eq:v_closed} yields
\begin{equation}\label{eq:diff}
\phi_{\varepsilon}(\vbf_{k+1})-\phi_{\varepsilon}(\xbf_{k})\leq - \del[2]{\frac{1}{2\alpha_k} -\frac{L_{\varepsilon}}{2}} \|  \vbf_{k+1}-\xbf_{k} \|^2 = - \frac{\alpha_k(1- \alpha_k L_{\varepsilon})}{2} \|  \nabla \phi_{\varepsilon} (\xbf_k)\|^2 \le 0,
\end{equation}
where we used the fact that $\alpha_k L_{\varepsilon} \le \frac{1}{\delta_2} < 1$ to obtain the last inequality.
According to the selection rule \eqref{eq:x_select}, if $\phi_{\varepsilon}(\ubf_{k+1})\leq \phi_{\varepsilon}(\vbf_{k+1})$, then $\xbf_{k+1}=\ubf_{k+1}$, and $\phi_{\varepsilon}(\xbf_{k+1})=\phi_{\varepsilon}(\ubf_{k+1})\leq \phi_{\varepsilon}(\vbf_{k+1})$;
If $\phi_{\varepsilon}(\vbf_{k+1})< \phi_{\varepsilon}(\ubf_{k+1})$, then $\xbf_{k+1}=\vbf_{k+1}$, and $\phi_{\varepsilon}(\xbf_{k+1})=\phi_{\varepsilon}(\vbf_{k+1})$. 
Therefore, in either case, \eqref{eq:diff} implies $\phi_{\varepsilon}(\xkp) - \phi_{\varepsilon}(\xk) \le \phi_{\varepsilon}(\vkp) - \phi_{\varepsilon}(\xk) \le  0 $, and hence
\begin{equation}
\label{eq:recursive}
\phi_{\varepsilon}(\xbf_{k+1}) \le \phi_{\varepsilon}(\vbf_{k+1}) \le \phi_{\varepsilon}(\xbf_{k}) \le \cdots \le \phi_{\varepsilon}(\xbf_{0}),
\end{equation}
for all $k\ge 0$. 
Moreover, rearranging \eqref{eq:diff} and recalling that $\frac{1}{\delta_1 L_{\varepsilon}} \le \alpha_k  \le \frac{1}{\delta_2 L_{\varepsilon}}$ yield
\begin{equation}\label{eq:diff2}
   \frac{\delta_2 - 1}{2\delta_1 \delta_2 L_{\varepsilon}} \|  \nabla \phi_{\varepsilon} (\xbf_k)\|^2 \le  \frac{\alpha_k(1- \alpha_k L_{\varepsilon})}{2} \|  \nabla \phi_{\varepsilon} (\xbf_k)\|^2 \le \phi_{\varepsilon}(\xbf_{k})-\phi_{\varepsilon}(\vbf_{k+1}) \le \phi_{\varepsilon}(\xbf_{k})-\phi_{\varepsilon}(\xbf_{k+1}).
\end{equation}
Summing up \eqref{eq:diff2} for $k=0,\dots,K$ and using the fact that $\phi_{\varepsilon}(\xbf) \ge \phi(\xbf)-\frac{m\varepsilon}{2} \ge \phi^* - \frac{m\varepsilon}{2}$ for every $\xbf \in \Xcal$, we know that
\begin{equation}
    \sum_{k=0}^{K} \|  \nabla \phi_{\varepsilon} (\xbf_k) \| ^2 \leq \frac{2\delta_1 \delta_2 L_{\varepsilon}(\phi_{\varepsilon}(\xbf_0) - \phi_{\varepsilon}(\xbf_{K+1}))}{\delta_2 - 1} \le \frac{\delta_1 \delta_2 L_{\varepsilon}(2\phi_{\varepsilon}(\xbf_0) - 2\phi^* + m \varepsilon)}{\delta_2 - 1}.
\end{equation}
Note that the right hand side is a finite constant, and hence by letting $K\to\infty$ we know that $\| \nabla \phi_{\varepsilon} (\xbf_k) \| \to 0$, which proves the first statement.

2. Denote $\kappa := \min\{ k \in \mathbb{N} \ \vert \ \| \nabla \phi_{\varepsilon}(\xbf_{k+1}) \| < \eta \}$, then we know that $\| \nabla \phi_{\varepsilon} (\xbf_{k+1}) \| \ge \eta$ for all $k \le \kappa-1$.
Hence we have 
\[
\kappa \eta^2 \le \sum_{k=0}^{\kappa-1} \|  \nabla \phi_{\varepsilon} (\xbf_{k+1}) \|^2 =\sum_{k=1}^{\kappa} \|  \nabla \phi_{\varepsilon} (\xbf_k) \| ^2 \leq \frac{\delta_1 \delta_2 L_{\varepsilon}(2\phi_{\varepsilon}(\xbf_0) - 2\phi^* + m \varepsilon)}{\delta_2 - 1},
\]
which implies the second statement.
\end{proof}

Now we consider the complete version of Algorithm \ref{alg:lda}. The first result we have is on the monotonicity of $\phi_{\varepsilon_{k}}(\xbf_{k}) + \frac{m \varepsilon_{k}}{2}$ in $k$.
\begin{lemma}
\label{lem:phi_decay}
Suppose that the sequence $\{\xbf_k\}$ is generated by Algorithm \ref{alg:lda} with $\frac{1}{\delta_1 L_{\varepsilon_k}} \le \alpha_k \le \frac{1}{\delta_2 L_{\varepsilon_k}}$ and any initial $\xbf_0$. Then for any $k\ge 0$ there is 
\begin{equation}\label{eq:phi_decay}
    \phi_{\varepsilon_{k+1}}(\xbf_{k+1}) + \frac{m \varepsilon_{k+1}}{2} \le \phi_{\varepsilon_{k}}(\xbf_{k+1}) + \frac{m \varepsilon_{k}}{2} \le \phi_{\varepsilon_{k}}(\xbf_{k}) + \frac{m \varepsilon_{k}}{2}.
\end{equation}
\end{lemma}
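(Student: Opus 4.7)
\textbf{Proof plan for Lemma \ref{lem:phi_decay}.} I would split the desired chain into two parts and handle each independently.

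The right-hand inequality $\phi_{\varepsilon_k}(\xkp) \le \phi_{\varepsilon_k}(\xk)$ is essentially a restatement of the descent property already established inside the proof of Lemma \ref{lem:inner}. Concretely, I would reuse the same three ingredients used there with $\varepsilon = \varepsilon_k$: (i) the $L_{\varepsilon_k}$-Lipschitz continuity of $\nabla \phi_{\varepsilon_k}$ from Lemma \ref{lem:r_Lip}, which gives the standard descent lemma upper bound on $\phi_{\varepsilon_k}(\vkp)$; (ii) the optimality condition of $\vkp$ in \eqref{eq:v} together with $\vkp = \xk - \alpha_k \nabla \phi_{\varepsilon_k}(\xk)$ and the step-size bound $\alpha_k \le 1/(\delta_2 L_{\varepsilon_k})$, which yields $\phi_{\varepsilon_k}(\vkp) \le \phi_{\varepsilon_k}(\xk)$; and (iii) the selection rule \eqref{eq:x_select}, which guarantees $\phi_{\varepsilon_k}(\xkp) \le \phi_{\varepsilon_k}(\vkp)$. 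Chaining these gives the right inequality. Adding $\tfrac{m\varepsilon_k}{2}$ to both sides then yields the full right-hand side of \eqref{eq:phi_decay}.

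For the left-hand inequality, the key observation I would establish is the monotonicity of the map $\varepsilon \mapsto \reps(\xbf) + \tfrac{m\varepsilon}{2}$ for each fixed $\xbf$. My approach is to exploit the variational definition \eqref{eq:r_eps_def}. For $0 < \varepsilon' \le \varepsilon$, let $\ybf^*_{\varepsilon'}$ denote the maximizer at level $\varepsilon'$; since $\ybf^*_{\varepsilon'} \in \Ycal$ is feasible for the problem defining $\reps$, subtracting the two max values gives
\[
\reps[_{\varepsilon'}](\xbf) - \reps(\xbf) \le \langle \gbf(\xbf), \ybf^*_{\varepsilon'}\rangle - \tfrac{\varepsilon'}{2}\|\ybf^*_{\varepsilon'}\|^2 - \bigl(\langle \gbf(\xbf), \ybf^*_{\varepsilon'}\rangle - \tfrac{\varepsilon}{2}\|\ybf^*_{\varepsilon'}\|^2\bigr) = \tfrac{\varepsilon-\varepsilon'}{2}\|\ybf^*_{\varepsilon'}\|^2.
\]
Combined with the uniform bound $\|\ybf^*_{\varepsilon'}\|^2 \le m$ from the definition of $\Ycal$ (each block has norm at most $1$ and there are $m$ blocks), this rearranges to $\reps[_{\varepsilon'}](\xbf) + \tfrac{m\varepsilon'}{2} \le \reps(\xbf) + \tfrac{m\varepsilon}{2}$. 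Since Line 7 of Algorithm \ref{alg:lda} ensures $\varepsilon_{k+1} \in \{\varepsilon_k, \gamma \varepsilon_k\}$, we always have $\varepsilon_{k+1} \le \varepsilon_k$; applying the monotonicity at $\xbf = \xkp$ and adding $f(\xkp)$ to both sides gives the left-hand inequality.

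The main obstacle is really only the left inequality, and it reduces to identifying the right monotonicity statement; once one notices that $\reps$ is a pointwise maximum of affine-in-$\varepsilon$ functions offset by $-\tfrac{\varepsilon}{2}\|\ybf\|^2$, the envelope-type comparison above is immediate. No analysis of the update rules for $\ukp$ or $\vkp$ is needed for this half, which is what allows the two inequalities to be proved by largely disjoint arguments and then glued together.
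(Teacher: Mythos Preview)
Your proposal is correct. The right-hand inequality is handled exactly as in the paper, by invoking the descent argument of Lemma~\ref{lem:inner} (specifically \eqref{eq:recursive}).

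For the left-hand inequality you take a genuinely different route from the paper. The paper works with the explicit closed form \eqref{eq:r_eps}: it decomposes $r_\varepsilon(\xbf) = \sum_i r_{\varepsilon,i}(\xbf)$ and, for each $i$, proves $r_{\varepsilon_{k+1},i}(\xkp) + \tfrac{\varepsilon_{k+1}}{2} \le r_{\varepsilon_k,i}(\xkp) + \tfrac{\varepsilon_k}{2}$ via a three-case analysis depending on whether $\|\gbf_i(\xkp)\|$ lies above $\varepsilon_k$, between $\varepsilon_{k+1}$ and $\varepsilon_k$, or below $\varepsilon_{k+1}$. Your argument instead stays at the level of the variational definition \eqref{eq:r_eps_def}: by feasibility of $\ybf^*_{\varepsilon'}$ for the problem at level $\varepsilon$ you get $r_{\varepsilon'}(\xbf) - r_\varepsilon(\xbf) \le \tfrac{\varepsilon-\varepsilon'}{2}\|\ybf^*_{\varepsilon'}\|^2 \le \tfrac{m(\varepsilon-\varepsilon')}{2}$ in one line. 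This envelope argument is shorter, avoids all casework, and does not rely on the specific Huber-type form of $r_\varepsilon$; it would apply verbatim to any Nesterov smoothing over a dual set with $\sup_{\ybf\in\Ycal}\|\ybf\|^2 \le m$. The paper's case analysis, by contrast, makes the per-coordinate structure visible but is more labor-intensive and tied to the explicit formula.
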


\begin{proof}
Due to \eqref{eq:recursive}, the second inequality holds immediately. So we focus on the first inequality.
For any $\varepsilon>0$ and $\xbf$, denote
\begin{equation}\label{eq:rei}
r_{\varepsilon, i}(\xbf) := \begin{cases}
\frac{1}{2 \varepsilon} \| \gbf_i(\xbf)\|, & \mbox{if} \ \|\gbf_i(\xbf)\| \le \varepsilon, \\
\| \gbf_i(\xbf)\| - \frac{\varepsilon}{2}, & \mbox{if} \ \|\gbf_i(\xbf)\| > \varepsilon .
\end{cases}
\end{equation}
Then it is clear that $\phi_{\varepsilon}(\xbf) = \sum_{i=1}^m r_{\varepsilon,i}(\xbf) + f(\xbf)$.
To prove the first inequality, it suffices to show that
\begin{equation}
    \label{eq:r_decay}
    r_{\varepsilon_{k+1},i}(\xkp) + \frac{\varepsilon_{k+1}}{2} \le r_{\epsk,i}(\xkp) + \frac{\epsk}{2}.
\end{equation}
If $\varepsilon_{k+1} = \varepsilon_{k}$, then the two quantities above are identical and the first inequality holds. Now suppose $\varepsilon_{k+1} = \gamma \varepsilon_{k} < \varepsilon_k$. %
We then consider the relation between $\| \gbf_i(\xbf_{k+1})\|$, $\varepsilon_{k+1}$ and $\varepsilon_k$ in three cases:
(i) If $\| \gbf_i(\xkp) \| > \varepsilon_{k} > \varepsilon_{k+1}$, then by the definition in \eqref{eq:rei}, there is
\[
r_{\varepsilon_{k+1},i}(\xkp) + \frac{\varepsilon_{k+1}}{2} = \| \gbf_i(\xkp) \| = r_{\varepsilon_{k},i}(\xkp) + \frac{\varepsilon_{k}}{2}.
\]
(ii) If $\varepsilon_{k} \ge \| \gbf_i(\xkp) \| > \varepsilon_{k+1}$, then \eqref{eq:rei} implies
\[
r_{\varepsilon_{k+1},i}(\xkp) + \frac{\varepsilon_{k+1}}{2} = \frac{\| \gi(\xkp) \|^2}{2\epskp} + \frac{\epskp}{2} \le \frac{\|\gi(\xkp)\|}{2} + \frac{\|\gi(\xkp)\|}{2} = r_{\epsk,i}(\xkp) + \frac{\epsk}{2}.
\]
(iii) If $\varepsilon_{k}  > \varepsilon_{k+1} \ge \| \gbf_i(\xkp) \|$, then we know that $\frac{\|\gi(\xkp)\|^2}{2\varepsilon} + \frac{\varepsilon}{2}$---as a function of $\varepsilon$---is non-decreasing for all $\varepsilon \ge \| \gi(\xkp)\|^2 $, which implies \eqref{eq:r_decay}.
Therefore, in either of the three cases, \eqref{eq:r_decay} holds and hence
\[
r_{\varepsilon_{k+1}}(\xkp) + \frac{m\varepsilon_{k+1}}{2} = \sum_{i=1}^m \del[2]{ r_{\varepsilon_{k+1},i}(\xkp) + \frac{\varepsilon_{k+1}}{2}} \le \sum_{i=1}^m \del[2]{ r_{\epsk,i}(\xkp) + \frac{\epsk}{2} } = r_{\varepsilon_{k}}(\xkp) + \frac{m\varepsilon_{k}}{2},
\]
which implies the first inequality of \eqref{eq:phi_decay}.
\end{proof}

Now we are ready to prove the iteration complexity of Algorithm \ref{alg:lda} for any $\etol>0$. Note that Lemma \ref{lem:inner} implies that the reduction criterion in Line 7 of Algorithm \ref{alg:lda} must be satisfied within finitely many iterations since it was met last time, and hence $\varepsilon_k$ will eventually be small enough to satisfy Line 8 and terminate the algorithm. 
Let $k_{l} $ be the counter of iteration when the criterion in Line 7 of Algorithm \ref{alg:lda} is met for the $l$-th time (we set $k_{0}=-1$), then we can partition the iteration counters $k=0,1,2,\dots,$ into segments accordingly, such that $\varepsilon_k = \varepsilon_{k_{l} +1} = \varepsilon_0 \gamma^l$ for $k=k_{l} +1,\dots,k_{l+1} $ in the $l$-th segment.
From Lemma \ref{lem:inner}, we can bound the length of each segment and hence the total iteration number which is the sum of these lengths. These results are given in the following theorem.

\begin{theorem} \label{thm:eta_shrinking}
Suppose that $\{\xbf_{k}\}$ is the sequence generated by Algorithm \ref{alg:lda} with any initial $\xbf_0$ and step size $\frac{1}{\delta_1 \Lepsk} \le \alpha_k \le \frac{1}{\delta_2 \Lepsk}$. Then the following statements hold:
\begin{enumerate}
\item The number of iterations, $k_{l+1} - k_{l}$, for the $l$-th segment is bounded by
\begin{equation}
\label{eq:inner_bound}
k_{l+1} - k_{l} \le c_1 \gamma^{-2l} + c_2 \gamma^{-3l},
\end{equation}
where the constants $c_1$ and $c_2$ are defined by
\begin{equation}\label{eq:c_def}
c_1 = \frac{\delta_1 \delta_2 (L_f + \sqrt{m}L_g)(2\phi(\xbf_0) - 2\phi^* + m\varepsilon_0)}{(\delta_1 - 1) \sigma^2 \varepsilon_0^2 \gamma^2},\quad
c_2 = \frac{\delta_1 \delta_2 M^2 (2\phi(\xbf_0) - 2\phi^* + m\varepsilon_0)}{(\delta_1 - 1) \sigma^2 \varepsilon_0^3 \gamma^3}.
\end{equation}
\item The total number of iterations for Algorithm \ref{alg:lda} to terminate with $\etol>0$ is bounded by
\begin{equation}
\label{eq:total_iter}
\frac{c_1 \sigma^2 \varepsilon_0^2}{ 1 - \gamma^2} \etol^{-2} + \frac{c_2 \sigma^3 \varepsilon_0^3}{1-\gamma^3} \etol^{-3} - \frac{c_1 \gamma^2 + c_2 \gamma^3 - (c_1 + c_2) \gamma^5}{(1-\gamma^2)(1 - \gamma^3)} = O(\etol^{-3}).
\end{equation}
\end{enumerate}
\end{theorem}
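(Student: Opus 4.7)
The plan is to leverage Lemma \ref{lem:inner}, which bounds the number of inner iterations needed to drive the gradient of $\phi_\varepsilon$ below a given tolerance, together with Lemma \ref{lem:phi_decay}, which controls the function value at the start of each segment uniformly in the segment index $l$.

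For Part 1, I would fix the $l$-th segment, on which $\varepsilon_k \equiv \varepsilon_0 \gamma^l$ for all $k \in \{k_l+1,\dots,k_{l+1}\}$, and view the iterates within this segment as a run of the inner loop in Lemma \ref{lem:inner} with constant smoothing parameter $\varepsilon = \varepsilon_0\gamma^l$, starting from $\xbf_{k_l+1}$. The segment ends as soon as the reduction criterion $\|\nabla \phi_{\varepsilon_k}(\xkp)\| < \sigma\gamma\varepsilon_k = \sigma\varepsilon_0\gamma^{l+1}$ is satisfied, so applying Lemma \ref{lem:inner}(2) with $\eta = \sigma\varepsilon_0\gamma^{l+1}$ gives
\[
k_{l+1} - k_l \le \frac{\delta_1 \delta_2\, L_{\varepsilon_0 \gamma^l}\bigl(2\phi_{\varepsilon_0\gamma^l}(\xbf_{k_l+1}) - 2\phi^* + m \varepsilon_0 \gamma^l\bigr)}{(\delta_2-1)\,\sigma^2 \varepsilon_0^2 \gamma^{2l+2}}.
\]
To make this bound uniform in $l$, I would apply Lemma \ref{lem:phi_decay} telescopically to get $\phi_{\varepsilon_0 \gamma^l}(\xbf_{k_l+1}) + \tfrac{m \varepsilon_0 \gamma^l}{2} \le \phi_{\varepsilon_0}(\xbf_0) + \tfrac{m \varepsilon_0}{2} \le \phi(\xbf_0) + \tfrac{m \varepsilon_0}{2}$, so that the numerator is bounded by $L_{\varepsilon_0 \gamma^l}\cdot(2\phi(\xbf_0) - 2\phi^* + m\varepsilon_0)$. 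Finally, using the split $L_{\varepsilon_0 \gamma^l} = (L_f + \sqrt{m}\, L_g) + M^2\varepsilon_0^{-1}\gamma^{-l}$, the two resulting pieces produce precisely the terms with prefactors $c_1 \gamma^{-2l}$ and $c_2 \gamma^{-3l}$ appearing in \eqref{eq:inner_bound}, with $c_1$ and $c_2$ matching the definitions in \eqref{eq:c_def} (modulo the typographical choice of $\delta_1-1$ vs.\ $\delta_2-1$ in the denominator).

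For Part 2, the algorithm terminates at some segment index $L$ as soon as $\sigma \varepsilon_0 \gamma^L < \etol$, so for the bound it suffices to take $\gamma^{-L} = \sigma \varepsilon_0 /\etol$. The total iteration count is then $\sum_{l=0}^{L} (k_{l+1} - k_l) \le \sum_{l=0}^{L} \bigl(c_1 \gamma^{-2l} + c_2 \gamma^{-3l}\bigr)$, which I would evaluate using the standard geometric sums
\[
\sum_{l=0}^{L} \gamma^{-2l} = \frac{\gamma^{-2L} - \gamma^2}{1 - \gamma^2},\qquad \sum_{l=0}^{L} \gamma^{-3l} = \frac{\gamma^{-3L} - \gamma^3}{1 - \gamma^3}.
\]
Substituting $\gamma^{-L} = \sigma\varepsilon_0/\etol$ extracts the leading $O(\etol^{-2})$ and $O(\etol^{-3})$ contributions, and placing the two leftover constants $-c_1\gamma^2/(1-\gamma^2)$ and $-c_2\gamma^3/(1-\gamma^3)$ over the common denominator $(1-\gamma^2)(1-\gamma^3)$ collapses them to the expression $-\bigl(c_1\gamma^2 + c_2\gamma^3 - (c_1+c_2)\gamma^5\bigr)/\bigl((1-\gamma^2)(1-\gamma^3)\bigr)$ in \eqref{eq:total_iter}.

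I expect the main obstacle to be purely bookkeeping rather than anything deep: carefully tracking (i) the offset between each segment's starting iterate $\xbf_{k_l+1}$ and what Lemma \ref{lem:inner} calls $\xbf_0$, (ii) the extra factor of $\gamma$ between $\varepsilon_k$ and the threshold $\sigma\gamma\varepsilon_k$ so that $\eta^2$ yields $\gamma^{-(2l+2)}$ rather than $\gamma^{-2l}$ in the bound, and (iii) the exact geometric-sum indexing needed to recover the constant correction term. A secondary point is to verify that the split of $L_{\varepsilon_0\gamma^l}$ into an $l$-independent part and an $l$-dependent part cleanly separates the two terms so that the coefficients match $c_1$ and $c_2$ in \eqref{eq:c_def} exactly.
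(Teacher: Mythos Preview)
Your proposal is correct and follows essentially the same route as the paper's proof: apply Lemma~\ref{lem:inner}(2) on the $l$-th segment with $\varepsilon=\varepsilon_0\gamma^{l}$ and $\eta=\sigma\varepsilon_0\gamma^{l+1}$, control the starting value $\phi_{\varepsilon}(\xbf_{k_l+1})$ uniformly via the telescoping in Lemma~\ref{lem:phi_decay}, split $L_{\varepsilon_0\gamma^l}$ into its $l$-independent and $l$-dependent parts to obtain \eqref{eq:inner_bound}, and then sum the geometric series and substitute the termination bound $\gamma^{-(\ell-1)}\le \sigma\varepsilon_0/\etol$ for Part~2. Your observation about $\delta_1-1$ versus $\delta_2-1$ is apt---the paper itself carries this discrepancy between Lemma~\ref{lem:inner} and the statement of the theorem.
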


\begin{proof} 
1. Due to Lemma \ref{lem:phi_decay}, we know that, for all $k\ge0$, there is
\begin{equation}\label{eq:phi_eps_bound}
\phi_{\epskp} (\xkp) + \frac{m \epskp}{2} \le \phi_{\epsk}(\xk) + \frac{m\epsk}{2} \le \cdots \le \phi_{\varepsilon_0}(\xbf_0) + \frac{m \varepsilon_0}{2} \le \phi(\xbf_0) + \frac{m \varepsilon_0}{2}
\end{equation}
where we used the fact that $\phi_{\varepsilon}(\xbf) \le \phi(\xbf)$ for all $\varepsilon>0$ and $\xbf \in \Xcal$ in the last inequality.
Therefore $\klp - \kl$ satisfies the bound in Lemma \ref{lem:inner} (Statement 2) with $\varepsilon = \varepsilon_{\kl+1} = \varepsilon_0 \gamma^l$, $\eta = \sigma \gamma \varepsilon_{\kl+1} = \sigma \varepsilon_0 \gamma^{l+1}$ and initial $\xbf_{\kl+1}$. Namely, there is
\begin{align*}
\klp - \kl 
\le\  & \frac{2\delta_1 \delta_2 (L_f + \sqrt{m} L_g + \frac{M^2}{\varepsilon}) (\phi_{\varepsilon}(\xbf_{\kl + 1}) - \phi^* + \frac{m\varepsilon}{2})}{(\delta_1 -1) \eta^2} \\
\le\  & \frac{\delta_1 \delta_2 (L_f + \sqrt{m} L_g) (2\phi(\xbf_{0}) - 2\phi^* + m\varepsilon_0)}{(\delta_1 -1) \eta^2} + \frac{\delta_1 \delta_2 M^2 (2\phi(\xbf_{0}) - 2\phi^* + m\varepsilon_{0})}{(\delta_1 -1) \varepsilon \eta^2}\\
=\ & c_1 \gamma^{-2l} + c_2 \gamma^{-3l},
\end{align*}
where we used \eqref{eq:phi_eps_bound} to obtain $\phi_{\varepsilon}(\xbf_{\kl+1}) + \frac{m \varepsilon}{2 } \le \phi(\xbf_0) + \frac{m \varepsilon_0}{2}$ for $\varepsilon = \varepsilon_{\kl+1} $ in the second inequality and the definitions of $c_1$ and $c_2$ in \eqref{eq:c_def} to obtain the last equality.

2. Let $\ell$ be the number of times the reduction criterion in Line 7 of Algorithm \ref{alg:lda} is satisfied before it is terminated by Line 8. Then $ \sigma \varepsilon_0 \gamma^{\ell-1} \ge \etol$. Hence we have  $\ell - 1\le \log_{\gamma}^{(\sigma\varepsilon_0)^{-1}\etol}$, which implies that the total number of iterations for Algorithm \ref{alg:lda} to terminate with $\etol$ is 
\begin{equation*}
\sum_{l=0}^{\ell - 1} (\klp - \kl) \le \sum_{l=0}^{\ell - 1} (c_1 \gamma^{-2l} + c_2 \gamma^{-3l}) \le
\frac{c_1(\gamma^{-2(\ell-1)} - \gamma^2)}{1-\gamma^2} + \frac{c_2(\gamma^{-3(\ell-1)}- \gamma^3)}{1-\gamma^3} 
\end{equation*}
and readily reduces to \eqref{eq:total_iter}. This completes the proof.
\end{proof}

\newblue{Theorem \ref{thm:eta_shrinking} provides the upper bound of the iteration complexity of Algorithm \ref{alg:lda} to reach an $\epsilon_{\text{tol}}$ accurate solution of the nonsmooth nonconvex problem \eqref{eq:loa} for any user chosen $\epsilon > 0$. We also remark that the complexity analysis in Theorem \ref{thm:eta_shrinking} has taken into account the fact that the Lipschitz constant of $\nabla \phi_{\varepsilon_k}$ gradually increases as $\varepsilon_k$ decreases. Moreover, when the termination condition in Step 8 of Algorithm \ref{alg:lda} is met, the smoothing parameter $\varepsilon$ satisfies $\varepsilon = \varepsilon_k <\epsilon_{\text{tol}}/\sigma$. Note that the bound \eqref{eq:r_eps_bound} controls the distance between $\reps$ and the original nonsmooth $r$ and thus that between $\phi_{\varepsilon}$ and $\phi$. In addition, there is $\|\nabla \phi_{\varepsilon}\| < \sigma \gamma \varepsilon_k < \gamma \epsilon_{\text{tol}}$. Therefore, one can choose $\epsilon_{\text{tol}}$, $\sigma$ and $\gamma$ to achieve the desired accuracy.}

If we set $\etol=0$ and $K=\infty$ in Algorithm \ref{alg:lda}, 
then LDA will generate an infinite sequence $\{\xk\}$. We focus on the subsequence $\{\xbf_{\kl+1}\}$ which selects the iterates when the reduction criterion in Line 7  is satisfied for $k = \kl$ and $\varepsilon_k$ is reduced. Then we can show that every accumulation point of this subsequence is a Clarke stationary point, as shown in the following theorem.
%

\begin{theorem}
Suppose that $\{\xbf_{k}\}$ is the sequence generated by Algorithm \ref{alg:lda} with any initial $\xbf_0$ and step size $\frac{1}{\delta_1 \Lepsk} \le \alpha_k \le \frac{1}{\delta_2 \Lepsk}$, $\etol=0$ and $K=\infty$. Let $\{\xbf_{\kl+1}\}$ be the subsequence where the reduction criterion Line 7 of Algorithm \ref{alg:lda} is met for $k=k_l$ and $l=1,2,\dots$. Then the following statements hold:
\begin{enumerate}
\item
$\{\xbf_{\kl + 1}\}$ has at least one accumulation point.
\item
Every accumulation point of $\{\xbf_{\kl + 1}\}$ is a Clarke stationary point of \eqref{eq:phi}.
\end{enumerate}
\end{theorem}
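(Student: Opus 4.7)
The plan is to split the theorem into two pieces: statement 1 follows from coercivity once I bound the whole iterate sequence, and statement 2 is a passage to the limit in the approximate stationarity condition enforced by Line 7 of Algorithm \ref{alg:lda}, using the Clarke subdifferential formula of Lemma \ref{lem:r_subdiff}. For statement 1, iterating Lemma \ref{lem:phi_decay} gives $\phi_{\epsk}(\xk) + \tfrac{m\epsk}{2} \le \phi_{\varepsilon_0}(\xbf_0) + \tfrac{m\varepsilon_0}{2} \le \phi(\xbf_0) + \tfrac{m\varepsilon_0}{2}$ for every $k \ge 0$. Since \eqref{eq:r_eps_bound} yields $\phi(\xbf) \le \phi_\varepsilon(\xbf) + \tfrac{m\varepsilon}{2}$, the entire sequence $\{\xk\}$ lies in the sublevel set $\{\xbf \in \Xcal : \phi(\xbf) \le \phi(\xbf_0) + \tfrac{m\varepsilon_0}{2}\}$, which is bounded by (A3). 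Bolzano--Weierstrass applied to the subsequence $\{\xbf_{\kl+1}\}$ then yields at least one accumulation point.

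For statement 2, fix an accumulation point $\xbf^*$ and (relabeling if necessary) a subsequence with $\xbf_{\kl+1} \to \xbf^*$. The defining feature of these selected iterates is the reduction criterion $\|\nabla \phi_{\varepsilon_{\kl}}(\xbf_{\kl+1})\| < \sigma \gamma \varepsilon_{\kl}$, together with $\varepsilon_{\kl} = \varepsilon_0 \gamma^{l-1} \to 0$. Hence $\nabla \phi_{\varepsilon_{\kl}}(\xbf_{\kl+1}) = \nabla f(\xbf_{\kl+1}) + \nabla r_{\varepsilon_{\kl}}(\xbf_{\kl+1}) \to 0$, and continuity of $\nabla f$ gives $\nabla f(\xbf_{\kl+1}) \to \nabla f(\xbf^*)$. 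Therefore $\nabla r_{\varepsilon_{\kl}}(\xbf_{\kl+1}) \to -\nabla f(\xbf^*)$ along the subsequence, and it remains to prove that this limit lies in $\partial r(\xbf^*)$, because by \eqref{eq:phi_subdiff} this yields $0 \in \partial \phi(\xbf^*)$.

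I would carry out this limiting step via the closed form \eqref{eq:d_r_eps}: write $\nabla r_{\varepsilon_{\kl}}(\xbf_{\kl+1}) = \sum_{i=1}^m \nabla \gbf_i(\xbf_{\kl+1})^\top \cbf_i^{(l)}$, where the dual variables $\cbf_i^{(l)}$ coming from \eqref{eq:y_eps_sol} uniformly satisfy $\|\cbf_i^{(l)}\| \le 1$. Split $[m]$ by whether $\gbf_i(\xbf^*)$ vanishes. For each $i$ with $\gbf_i(\xbf^*) \ne 0$, continuity of $\gbf_i$ combined with $\varepsilon_{\kl} \to 0$ forces $\|\gbf_i(\xbf_{\kl+1})\| > \varepsilon_{\kl}$ for all sufficiently large $l$, so $\cbf_i^{(l)} = \gbf_i(\xbf_{\kl+1})/\|\gbf_i(\xbf_{\kl+1})\| \to \gbf_i(\xbf^*)/\|\gbf_i(\xbf^*)\|$ and the $i$-th summand converges to $\nabla \gbf_i(\xbf^*)^\top \gbf_i(\xbf^*)/\|\gbf_i(\xbf^*)\|$. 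For each $i \in I_0^* := \{i \in [m] : \gbf_i(\xbf^*) = 0\}$, the bound $\|\cbf_i^{(l)}\| \le 1$ allows me to extract a further subsequence (a finite diagonal extraction over $I_0^*$ suffices) along which $\cbf_i^{(l)} \to \wbf_i^*$ with $\|\wbf_i^*\| \le 1$, and hence with $\|\Pi(\wbf_i^*; \Ccal(\nabla \gbf_i(\xbf^*)))\| \le 1$. Matching the resulting limit against \eqref{eq:r_subdiff} identifies $-\nabla f(\xbf^*)$ as an element of $\partial r(\xbf^*)$, which completes the proof.

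The main obstacle will be the treatment of indices $i \in I_0^*$: the duals $\cbf_i^{(l)}$ need not converge a priori, and the assignment of such an index to the ``small-norm'' versus the ``unit-norm'' case of \eqref{eq:y_eps_sol} can alternate with $l$ because both $\|\gbf_i(\xbf_{\kl+1})\|$ and $\varepsilon_{\kl}$ shrink to zero. The uniform bound $\|\cbf_i^{(l)}\| \le 1$ absorbs both issues simultaneously, and it automatically implies the projection constraint in the Clarke subdifferential formula of Lemma \ref{lem:r_subdiff}; the bookkeeping to confirm is that the compactness extraction, combined with uniqueness of the limit $-\nabla f(\xbf^*)$, indeed exhibits this vector as a genuine element of $\partial r(\xbf^*)$ rather than merely of its closure.
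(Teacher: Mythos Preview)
Your proposal is correct and follows essentially the same route as the paper: boundedness via Lemma~\ref{lem:phi_decay} and coercivity for part~1, then splitting indices by whether $\gbf_i(\xbf^*)=0$ and using the uniform bound $\|\cbf_i^{(l)}\|\le 1$ on the dual variables from \eqref{eq:y_eps_sol} to pass to the limit for part~2. The only cosmetic difference is that where you extract a further subsequence so that each $\cbf_i^{(l)}$ converges to some $\wbf_i^*$, the paper instead argues directly that $\dist(\nabla\phi_{\varepsilon_{\kl}}(\xbf_{\kl+1}),\partial\phi(\xbf^*))\to 0$ and invokes closedness of the Clarke subdifferential; your worry at the end about ``closure versus genuine membership'' is therefore moot, since $\partial r(\xbf^*)$ is automatically closed.
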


\begin{proof}
1. Due to Lemma \ref{lem:phi_decay} and $\phi(\xbf) \le \phi_{\varepsilon}(\xbf) + \frac{m \varepsilon}{2}$ for all $\varepsilon>0$ and $\xbf \in \Xcal$, we know that
\begin{equation*}
\phi(\xk) \le \phi_{\epsk}(\xk) + \frac{m\epsk}{2} \le \cdots \le \phi_{\varepsilon_0}(\xbf_0) + \frac{m\varepsilon_0}{2} < \infty.
\end{equation*}
Since $\phi$ is coercive, we know that $\{\xk\}$ is bounded. Hence $\{ \xbf_{\kl+1}\}$ is also bounded and has at least one accumulation point.

2. Note that $\xbf_{\kl+1}$ satisfies the reduction criterion in Line 7 of Algorithm \ref{alg:lda}, i.e., $\| \nabla \phi_{\varepsilon_{\kl}} (\xbf_{\kl+1}) \| \le \sigma \gamma \varepsilon_{\kl} = \sigma \varepsilon_0 \gamma^{l+1} \to 0$ as $l \to \infty$.
For notation simplicity, we let $\{\xjp\}$ denote any convergent subsequence of $\{\xbf_{\kl +1}\}$ and $\varepsilon_j$ the corresponding $\epsk$ used in the iteration to generate $\xjp$. Then there exists $\xhat \in \Xcal$ such that $\xjp \to \xhat$, $\epsj \to 0$, and $\nabla \phi_{\epsj}(\xjp) \to 0$ as $j\to \infty$.

Recall the Clarke subdifferential of $\phi$ at $\xhat$ is given by \eqref{eq:phi_subdiff}:
\begin{equation}\label{eq:d_phi_xhat}
\partial \phi(\xhat) = \cbr[2]{\sum_{i \in I_0} \nabla \gi(\xhat)^{\top} \wbf_i + \sum_{ i \in I_1} \nabla \gi(\xhat)^{\top} \frac{\gi(\xhat)}{\| \gi(\xhat) \|} + \nabla f(\xhat) \ \bigg\vert \ \| \Pi(\wbf_i; \Ccal(\nabla \gi(\xhat))) \le 1,\ \forall\, i\in I_0},
\end{equation}
where $I_0 = \{i\in[m]\ \vert \ \|\gi(\xhat)\| = 0 \}$ and $I_1 = [m] \setminus I_0$.
Then we know that there exists $J$ sufficiently large, such that 
\[
\epsj < \frac{1}{2}\min \{ \|\gi(\xhat)\| \ \vert \ i\in I_1\} \le \frac{1}{2} \|\gi(\xhat)\| \le \| \gi(\xjp) \|, \quad \forall\, j\ge J,\quad \forall\, i\in I_1,
\]
where we used the facts that $\min \{ \|\gi(\xhat)\| \ \vert \ i\in I_1\}>0$ and $\epsj \to 0$ in the first inequality, and $\xjp \to \xhat$ and the continuity of $\gi$ for all $i$ in the last inequality.
Furthermore, we denote
\begin{equation*}
\sji := \begin{cases}
\frac{\gi(\xbf_{j+1})}{\epsj}, & \mbox{if}\ \|\gi(\xjp)\| \le \epsj, \\
\frac{\gi(\xbf_{j+1})}{\| \gi(\xjp) \|}, & \mbox{if}\ \|\gi(\xjp)\| > \epsj.
\end{cases}
\end{equation*}
Then we have
\begin{equation}\label{eq:d_phi_epsj}
\nabla \phi_{\epsj}(\xjp) = \sum_{i \in I_0} \nabla \gi(\xjp)^{\top} \sji + \sum_{ i \in I_1} \nabla \gi(\xjp)^{\top} \frac{\gi(\xjp)}{\| \gi(\xjp) \|} + \nabla f(\xjp).
\end{equation}
Comparing \eqref{eq:d_phi_xhat} and \eqref{eq:d_phi_epsj}, we can see that the last two terms on the right hand side of \eqref{eq:d_phi_epsj} converge to those of \eqref{eq:d_phi_xhat}, respectively, due to the facts that $\xjp \to \xhat$ and the the continuity of $\gi,\nabla \gi, \nabla f$. Moreover, noting that $\|\Pi(\sji; \Ccal(\nabla \gi(\xhat)))\| \le \| \sji \| \le 1$, we can see that the first term on the right hand side of \eqref{eq:d_phi_epsj} also converges to the set formed by the first term of \eqref{eq:d_phi_xhat} due to the continuity of $\gi$ and $\nabla \gi$. Hence we know that
\[
\dist( \nabla \phi_{\epsj}(\xjp), \partial \phi(\xhat)) \to 0,
\]
as $j \to 0$. Since $\nabla \phi_{\epsj}(\xjp) \to 0$ and $\partial \phi(\xhat)$ is closed, we conclude that $0 \in \partial \phi(\xhat)$.
\end{proof}

The analysis above shows the convergence properties and the iteration complexity of the proposed LDA. \newblue{In particular, any accumulation point of the specified subsequence of LDA is guaranteed to be a Clarke stationary point.} It is worth pointing out that, unlike most works in the literature, our convergence guarantee and the iteration complexity do not require KL property.

\section{Numerical Experiments}
\label{sec:experiment}

\subsection{Network architecture and parameter setting}
\label{subsec:network}
Throughout our experiments, we parameterize $\gbf$ in \eqref{eq:r} as a simple 4-layer convolutional neural network with componentwise activation function $a$ and no bias as follows: 
\begin{equation}\label{eq:g_net}
\begin{cases}
\mbox{For any $\xbf$, compute}\ \gbf(\xbf) = \hbf_4, \\
\mbox{where}\ \hbf_0 = \xbf,\ \mbox{and} \\
\hbf_{l} = a(\Wbf_{l-1} \hbf_{l-1}),\quad l=1,2,3,4,\\
\end{cases}
\quad
\mbox{and}
\quad
a (x) = 
\begin{cases}
0, & \mbox{if} \ x \leq -\delta, \\
\frac{1}{4\delta} x^2 + \frac{1}{2} x + \frac{\delta}{4}, & \mbox{if} \ -\delta < x < \delta, \\
x, & \mbox{if} \ x \geq \delta,
\end{cases}
\end{equation}
where $\delta = 0.01$ in our experiment. In \eqref{eq:g_net}, $\Wbf_l$ represents the convolution in the $l$-th layer. We set the kernel size to $3\times 3 \times d$ for all layers, where $d=32$ is the depth of the convolution kernel. In our experiments, we set stride to 1, and use zero-padding to preserve image size.
Then $\Wbf_0$ can be interpreted as a $dn\times n$ matrix with $3^2 \times 32$ learnable parameters and $\Wbf_{l}$ as $dn\times dn$ for $l=1,2, 3$ each with $3^2 \times 32^2$ learnable parameters.
In this case $m=n$ is the number of pixels in the image.
Note that $\gbf$ satisfies Assumption (A2) due to the boundedness of $a'$ and the fixed $\Wbf_l$ once learned.
The regularization is $r(\xbf) = \|\gbf(\xbf)\|_{2,1}$ as in \eqref{eq:r}, and $\reps$ and $\nabla \reps$ are given in \eqref{eq:r_eps} and \eqref{eq:d_r_eps}, respectively.
%

%
During training, we prescribe the iteration number $K=15$ for Algorithm \ref{alg:lda} which seems to reach a good compromise between network depth and performance in practice.
We adopt a warm start strategy by first training LDA with $K=3$ for 500 epochs, and then add 2 more phases and train the network for another 200 epochs, and so on, until we finish with $K=15$. 
The step sizes $\alpha_k$ and $\tau_k$ are also to be learned and allowed to vary across different phases. The threshold $\epsk$ is updated according to Algorithm \ref{alg:lda}, where the starting $\varepsilon_{0}$ is to be learned. 
%
We let $\theta$ denote the set of trainable parameters of LOA in Algorithm \ref{alg:lda}, including the convolutions $\{\Wbf_l\}_{l=0}^3$, the step sizes $\{\alpha_k, \tau_k\}_{k = 0}^K$ and the starting $\varepsilon_0$. 

Given $N$ training data pairs $\{(\bbf^{(s)}, \hat{\xbf}^{(s)} )\}_{s=1}^{N}$, where each $\hat{\xbf}^{(s)}$ is the ground truth data and $\bbf^{(s)}$ is the measurement of $\hat{\xbf}^{(s)}$, we solve $\theta$ by minimizing the loss function in \eqref{eq:loa} using the Adam Optimizer with learning rate $10^{-4}$ and $\beta_1=0.9$, $\beta_2=0.999$ and Xavier Initializer implemented in TensorFlow \cite{abadi2016tensorflow}.
All the experiments are performed on a desktop computer with Intel i7-6700K CPU at 3.40 GHz, 16 GB of memory, and an Nvidia GTX-1080Ti GPU of 11GB graphics card memory.

\subsection{Experimental results on image reconstruction}
\begin{figure}[t]
\centering 
\includegraphics[width=1.0\textwidth]{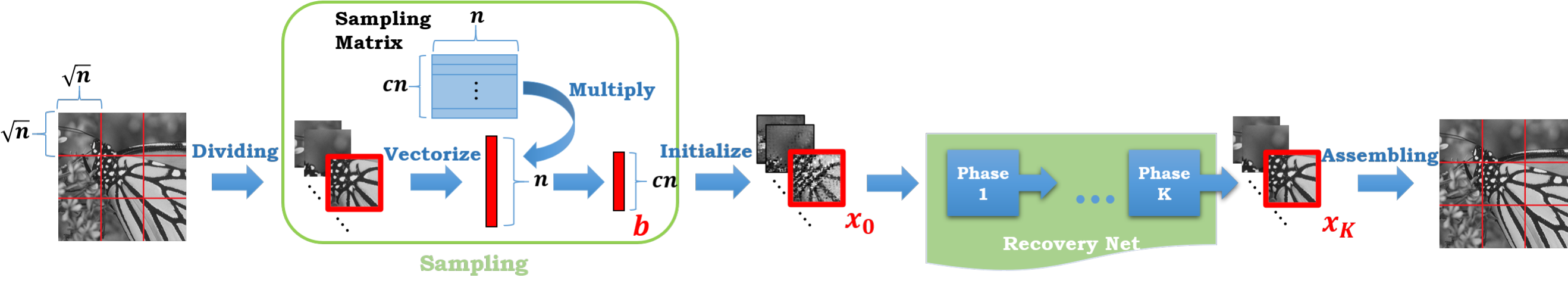}
\includegraphics[width=0.8\textwidth]{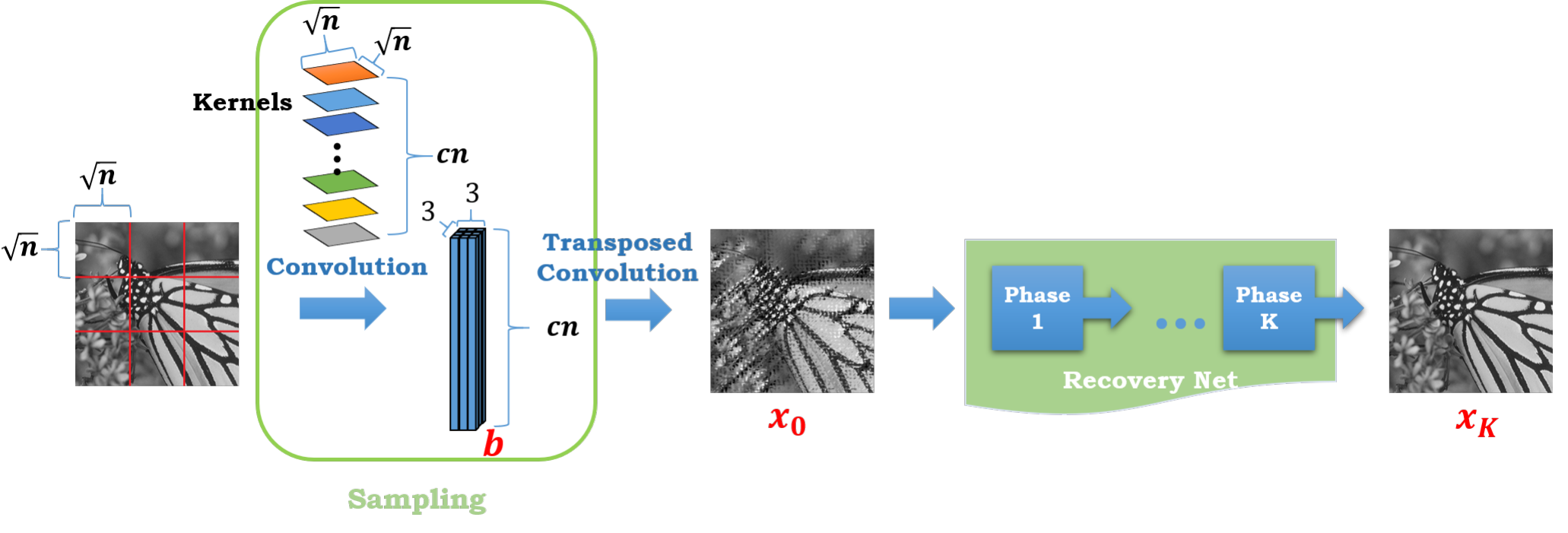}
\includegraphics[width=0.85\textwidth]{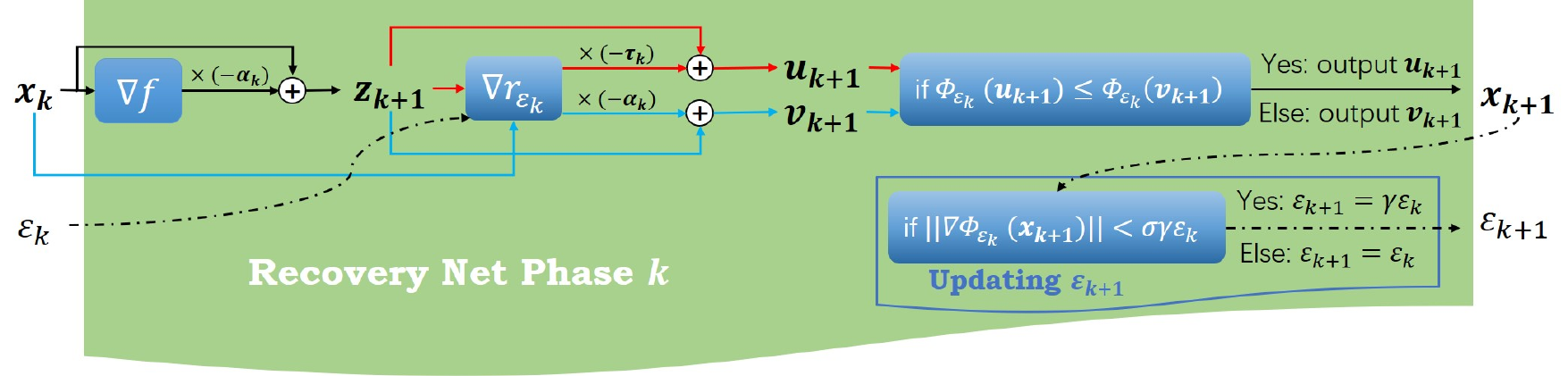}
\caption{The flowchart of the block compressed sensing natural image reconstruction. An image is partitioned into patches of size $n$, each of which, denoted by $\xhat$, is compressed by the sampling matrix $\Abf$ into data $\bbf= \Abf\hat{\xbf} \in \mathbb{R}^{cn}$. Top: The compressed data $\bbf$ is obtained using a prescribed sampling matrix $\Abf$ and is mapped to $\xbf_0$ as the initial value of the $K$-phase LDA reconstruction network; Middle: The sampling matrix $\Abf$ is jointly learned with the network parameters by appending $\bbf= \Abf\hat{\xbf} \in \mathbb{R}^{cn}$ as a linear layer before the LDA; Bottom: The detailed illustration of $k$th-phase Recovery Net.
}
\label{fig:flowchart}
\end{figure}
\subsubsection{Reconstruction on natural image compressed sensing}
\label{subsubsec:cs}
We first consider the natural image block compressed sensing (block CS) problem \cite{gan2007block} to recover images (image patches) from compressed data. In block CS, an image is partitioned into small blocks of size $n=33\times 33$, each of which (treated as a vector $\xhat\in \mathbb{R}^{n}$) is left-multiplied by a prescribed sensing matrix $\Abf \in \mathbb{R}^{cn \times n}$ to obtain the compressed data $\bbf= \Abf \xhat \in \mathbb{R}^{cn}$, where $c \in (0,1)$ is the compression ratio (CS ratio) \cite{dinh2013measurement,fowler2012block,gan2007block}. 
The flowchart of this process, including the compressed sensing part using the prescribed sampling matrix $\Abf$ and the reconstruction by a $K$-phase LDA network, is shown in the top panel of Figure \ref{fig:flowchart}.

We test the proposed Algorithm \ref{alg:lda} LDA on \textit{91 Images} for training and \textit{Set11} for testing \cite{kulkarni2016reconnet}. 
The training set $\mathcal{D}$ consists of $N = 88,912$ pairs of the form $(\bbf,\xhat) \in \mathbb{R}^{cn} \times \mathbb{R}^n$, where $\xhat$ is randomly cropped from the images.
The experiments on three different CS ratios $c=10\%,25\%,50\%$ are performed. 
The matrix $\Abf$ is set to a random Gaussian matrix whose rows are orthogonalized and the initial $\xbf_0$ is set to be $\xbf_0 = \mathbf{Q} \bbf$, where $\mathbf{Q} = \hat{\Xbf}\Bbf^{\top}(\Bbf\Bbf^{\top})^{-1}$ and $\hat{\Xbf} = [\hat{\xbf}^{(1)}, ... , \hat{\xbf}^{(N)}]$, $\Bbf = [\bbf^{(1)}, ... , \bbf^{(N)}]$, which follows \cite{Zhang2018ISTANetIO}.
We follow the same criterion when generating the testing data pairs from \textit{Set11}.
All the testing results are evaluated on the average Peak Signal-to-Noise Ratio (PSNR) of the reconstruction quality.
We compare with two classical image reconstruction methods, i.e., 
TVAL3 \cite{li2013efficient} and D-AMP \cite{metzler2016denoising}, and five state-of-the-art methods based on deep learning approaches, i.e., IRCNN \cite{zhang2017learning}, ReconNet \cite{kulkarni2016reconnet}, DR$^2$-Net \cite{yao2019dr2}, ISTA-Net$^+$ \cite{Zhang2018ISTANetIO} and DPA-Net \cite{sun2020dual}.
The comparison results on \textit{Set11} \cite{kulkarni2016reconnet} are listed in Table \ref{tab:cs-result},where the results of the first four methods and ISTA-Net$^+$ are quoted from \cite{Zhang2018ISTANetIO}.
The number of learnable parameters in the networks are also shown in the last column of Table \ref{tab:cs-result}. In general, a network has higher capacity and yields lower reconstruction error with more parameters (e.g., ISTA-Net$^+$ with varying parameters across different phases yields higher PSNR than that with parameters shared by all phases), but may also suffer the issue of parameter overfitting. As LDA uses the same set of network parameters in all phases, except the step size which is different in each phase but is only a scalar to be learned, it requires much fewer parameters than IRCNN, DR$^2$-Net, ISTA-Net$^+$ and DPA-Net. From Table \ref{tab:cs-result}, we can see that the proposed LDA obtained higher accuracy in reconstruction while using relatively small amount of network parameters.
\begin{table}[H]
\caption{Average PSNR (dB) of reconstructions obtained by the compared methods and the proposed LDA on \textit{Set11} dataset with CS ratios 10\%, 25\% and 50\% and the number of learnable network parameters (\#Par) using a prescribed compressed sensing matrix $\Abf$.  Subscript $*$ indicates that network parameters are shared across different phases. The \#Par of DR$^2$-Net \cite{yao2019dr2} and DPA-Net \cite{sun2020dual} reported below are calculated on CS ratio $25 \%$.
}
\label{tab:cs-result}
\medskip
\centering
\begin{tabular}{lcccr}
\toprule
\textbf{Method}  & \textbf{10\%}  & \textbf{25\%} & \textbf{50\%}& \textbf{\#Par}\\ 
\midrule 
TVAL3 \cite{li2013efficient} & 22.99 & 27.92& 33.55 & NA \\
D-AMP \cite{metzler2016denoising} & 22.64 & 28.46 &35.92 & NA \\ 
IRCNN \cite{zhang2017learning} & 24.02 & 30.07 & 36.23& 185,472\\ 
ReconNet \cite{kulkarni2016reconnet} & 24.28 & 25.60 & 31.50& \textbf{22,914}\\ 
DR$^2$-Net \cite{yao2019dr2} & 24.32 & 28.66 & - &373,664\\ 
ISTA-Net$_*^+$ \cite{Zhang2018ISTANetIO} & 26.51 & 32.08 & 37.59& 37,450\\ 
ISTA-Net$^+$ \cite{Zhang2018ISTANetIO} & 26.64 & 32.57 & 38.07& 336,978\\ 
DPA-Net \cite{sun2020dual} & 26.99 & 31.74 & 36.73& 9,519,750\\ 
\textbf{LDA} & \textbf{27.42} & \textbf{32.92} & \textbf{38.50}& {27,967}\\ 
\bottomrule
\end{tabular}
\end{table}
\begin{table}[H]
\centering
\caption{Average PSNR (dB) of reconstructions obtained by the compared methods and the proposed LDA on \textit{Set11} dataset with CS ratios 10\%, 30\% and the number of parameters (\#Par) in the reconstruction part of the network using jointly learned compressed sensing matrix $\Abf$.}
\begin{tabular}{lccr}
\toprule
\textbf{Method} & \textbf{10\%} & \textbf{30\%} & \textbf{\#Par} \\
\midrule 
CS-Net \cite{SJZ17} & 28.10 & 33.86 & 370,560\\
SCS-Net \cite{Shi_2019_CVPR} & 28.48 & 34.62 &  587,520\\ 
BCS-Net \cite{zhou2019multi} & 29.43 & 35.60 & 1,117,440\\ 
AMP-Net \cite{zhang2020amp} & 29.45 & 35.90 & 229,254\\
\textbf{LDA} & \textbf{30.03} & \textbf{36.47} &\textbf{27,967} \\ 
\bottomrule
\end{tabular}
\label{tab:jointcs}
\end{table}
%
\subsubsection{Joint compression and reconstruction of natural images}
We test LDA Algorithm \ref{alg:lda} on the problem of joint image compression and reconstruction, which is considered in several recent CS image reconstruction work \cite{Shi_2019_CVPR,zhang2020amp,zhou2019multi,ZZG20}. In this experiment, we prescribe the CS ratio $c \in (0,1)$ and let the compressed sensing matrix $\Abf \in \mathbb{R}^{cn\times n}$ be learned together with the reconstruction network. More precisely, we let a ground truth image (patch) $\xhat$ first pass a linear layer $\bbf= \Abf \xhat$, where $\Abf$ is also to be learned. Here $\Abf$ can be implemented as a convolutional operation with $cn$ kernels of size $\sqrt{n} \times \sqrt{n}$ and stride $\sqrt{n} \times \sqrt{n}$, and hence once applied to an image patch it returns a $cn$-vector. The sampling layer is followed by an initialization layer $\xbf_0 = \tilde{\Abf}\bbf$, where $\tilde{\Abf} \in \mathbb{R}^{n\times cn}$ is implemented as transposed convolutional operation \cite{dumoulin2016guide}. Then $\xbf_0$ is served as the input of LDA. Moreover, we add $(1/N)\cdot\sum_{s = 1}^N \|\tilde{\Abf}\Abf \xbf^{(s)} - \xbf^{(s)}\|^2$ with weight $0.01$ to the loss function in \eqref{eq:loa}, such that $\Abf$ and $\tilde{\Abf}$ are learned jointly with the network parameters during training.

The training dataset in our experiment consists of 89,600 image patches of size $96 \times 96$, where all these patches are the luminance components randomly cropped from images in BSD500 training and testing set ($200 + 200$ images) \cite{arbelaez2010contour}. 
Each image patch consists of 9 non-overlapping blocks of size $n=32\times 32=32^2$, where each block can be sampled independently by $\Abf$.
We use \textit{Set11} for testing. 
For comparison, we also test four recent methods in this experiment: CS-Net \cite{SJZ17}, SCS-Net \cite{Shi_2019_CVPR}, BCS-Net \cite{zhou2019multi} and AMP-Net \cite{zhang2020amp}.
All the compared methods are applied to \textit{Set11}, and the average PSNR are shown in Table \ref{tab:jointcs}.
Table \ref{tab:jointcs} also shows the number of learnable parameters of the reconstruction network part of each method. 
In addition to these parameters, all methods also need to learn the sampling matrix $\Abf$ with $cn \times n = 104,448$ variables when $c = 0.1$ and another $104,448$ variables of $\tilde{\Abf}$ for initialization, except that BCS-Net requires over 2.2M parameters for sampling and initialization. BCS-Net learns a set of sampling matrices with different rates and dynamically assigns the sampling resource depending on the embedded saliency information of each block \cite{zhou2019multi}.  
From Table \ref{tab:cs-result}, we can see that LDA outperforms all these state-of-the-art methods by a large margin, but only needs a fraction of the amount of learnable parameters compared to most methods.
\blue{
In Figures \ref{fig:butteryfly}--\ref{fig:parrot}, we show the reconstructed butterfly and cameraman images with CS ratio $10\%$ as well as the parrot image with CS ratio $30\%$. It is remarkable that the proposed LDA is able to recover fine patterns and details in the images, such as the texture of butterfly wing and the parrot feather shown in Figures \ref{fig:butteryfly} and \ref{fig:parrot}, and the boundary of the camera stand in Figure \ref{fig:lena}.
}
\begin{figure}[H]
\centering
\subfigure{\includegraphics[width=0.22\textwidth]{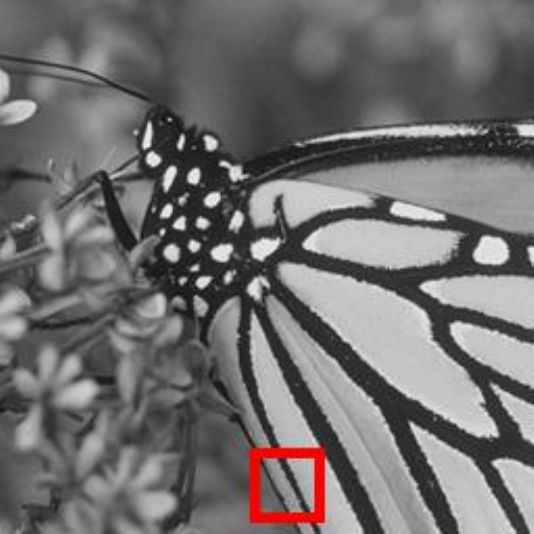}\label{gt_butterfly}}
\subfigure{\includegraphics[width=0.22\textwidth]{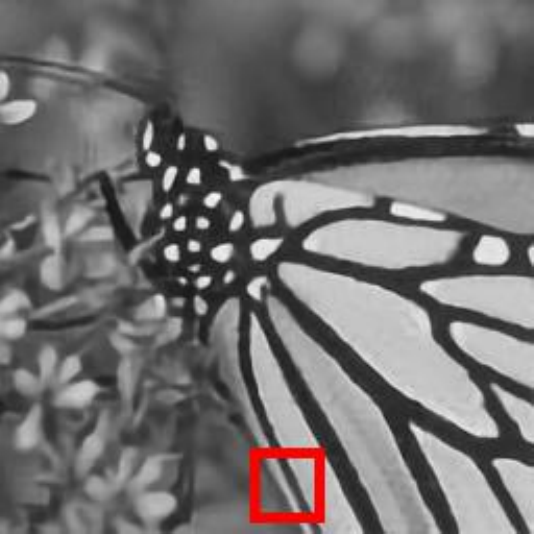}}
\subfigure{\includegraphics[width=0.22\textwidth]{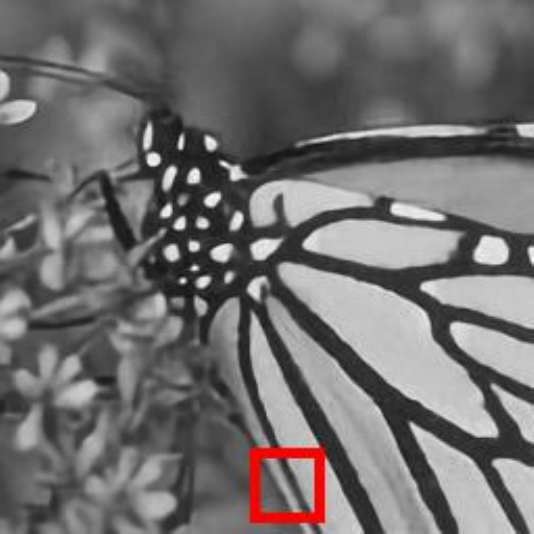}}
\subfigure{\includegraphics[width=0.22\textwidth]{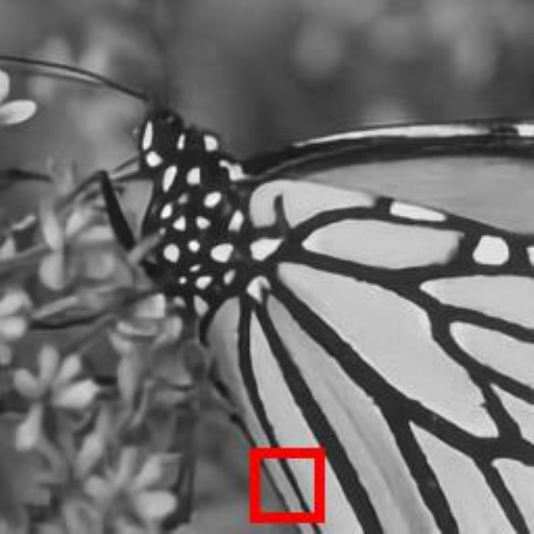}} \\
\setcounter{subfigure}{0}
\subfigure[\scriptsize{Reference}]{\includegraphics[width=0.22\textwidth]{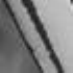}}
\subfigure[\scriptsize{CS-Net (28.31)}]{\includegraphics[width=0.22\textwidth]{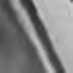}}
\subfigure[\scriptsize{SCS-Net (28.88)}]{\includegraphics[width=0.22\textwidth]{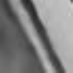}}
\subfigure[\scriptsize{LDA (30.54)}]{\includegraphics[width=0.22\textwidth]{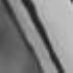}}
\caption{Block CS reconstruction of butterfly image with CS ratio 10\% obtained by CS-Net, SCS-Net and the proposed LDA. Images in the bottom row zoom in the corresponding ones in the top row. PSNR are shown in the parentheses.}
\label{fig:butteryfly}
\end{figure}
\begin{figure}[H]
\centering
\subfigure{\includegraphics[width=0.22\textwidth]{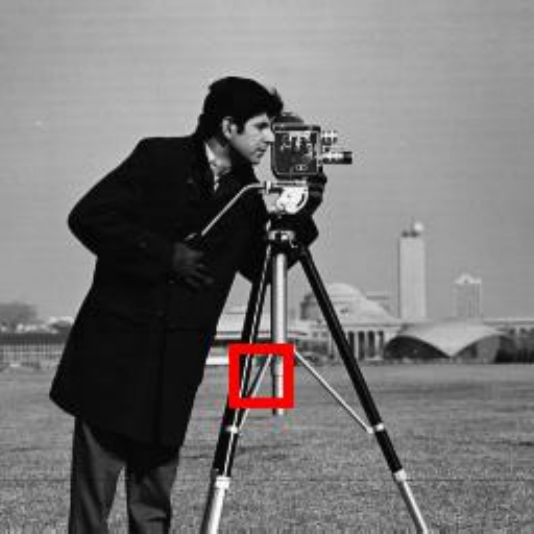}\label{gt_lena}}
\subfigure{\includegraphics[width=0.22\textwidth]{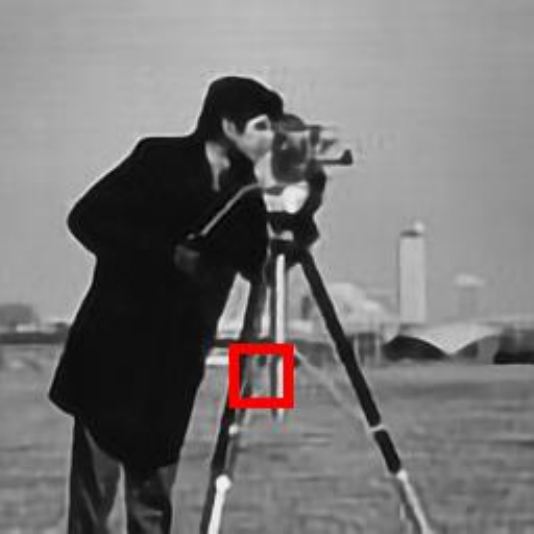}}
\subfigure{\includegraphics[width=0.22\textwidth]{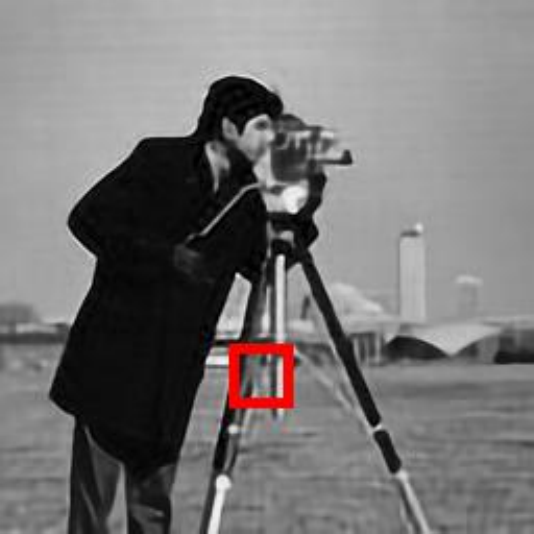}}
\subfigure{\includegraphics[width=0.22\textwidth]{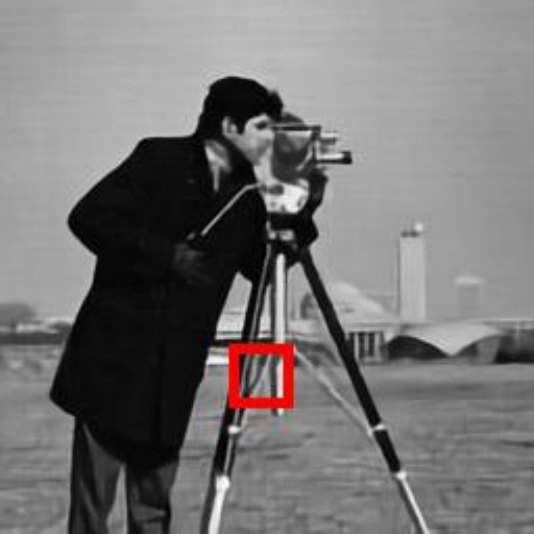}} \\
\setcounter{subfigure}{0}
\subfigure[\scriptsize{Reference}]{\includegraphics[width=0.22\textwidth]{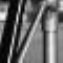}}
\subfigure[\scriptsize{CS-Net (25.35)}]{\includegraphics[width=0.22\textwidth]{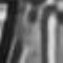}}
\subfigure[\scriptsize{SCS-Net (25.71)}]{\includegraphics[width=0.22\textwidth]{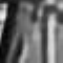}}
\subfigure[\scriptsize{LDA (27.45)}]{\includegraphics[width=0.22\textwidth]{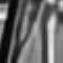}}
\caption{Block CS reconstruction of a cameraman image with CS ratio 10\% obtained by CS-Net, SCS-Net and the proposed LDA. Images in the bottom row zoom in the corresponding ones in the top row. PSNR are shown in the parentheses.}
\label{fig:lena}
\end{figure}
\begin{figure}[H]
\centering
\subfigure{\includegraphics[width=0.22\textwidth]{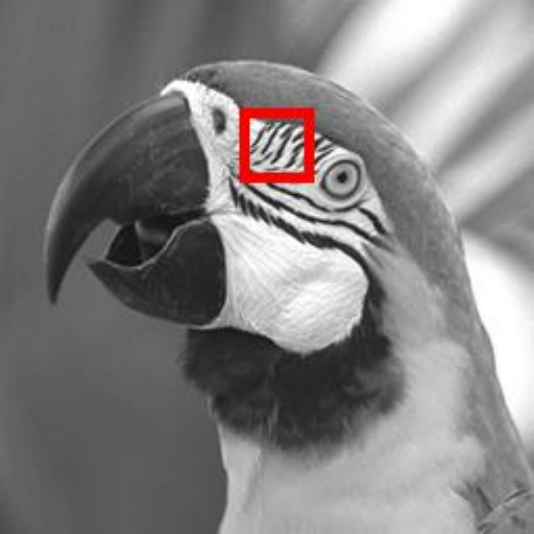}\label{gt_parrot}}
\subfigure{\includegraphics[width=0.22\textwidth]{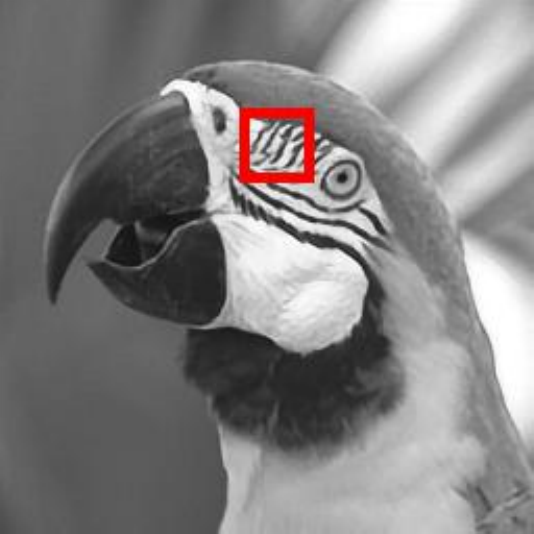}}
\subfigure{\includegraphics[width=0.22\textwidth]{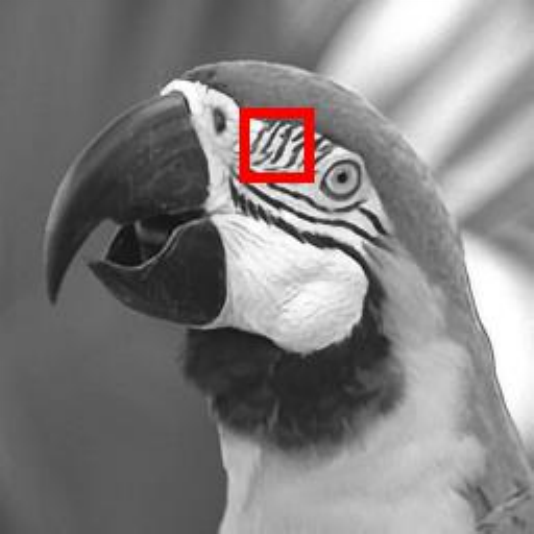}}
\subfigure{\includegraphics[width=0.22\textwidth]{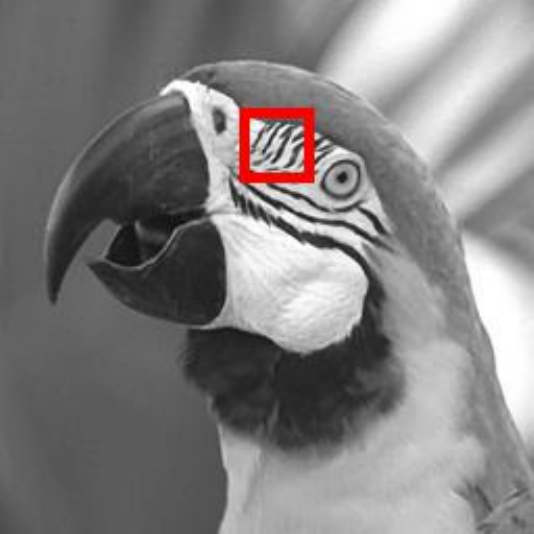}} \\
\setcounter{subfigure}{0}
\subfigure[\scriptsize{Reference}]{\includegraphics[width=0.22\textwidth]{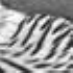}}
\subfigure[\scriptsize{CS-Net (33.77)}]{\includegraphics[width=0.22\textwidth]{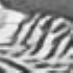}}
\subfigure[\scriptsize{SCS-Net (34.13)}]{\includegraphics[width=0.22\textwidth]{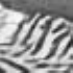}}
\subfigure[\scriptsize{LDA (36.43)}]{\includegraphics[width=0.22\textwidth]{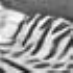}}
\caption{Block CS reconstruction of parrot image with CS ratio 30\% obtained by CS-Net, SCS-Net and the proposed LDA. Images in the bottom row zoom in the corresponding ones in the top row. PSNR are shown in the parentheses.}
\label{fig:parrot}
\end{figure}
\clearpage
\subsubsection{Magnetic resonance image reconstruction}
In this experiment, we consider the reconstruction problem in compressed sensing magnetic resonance imaging (CS-MRI). In CS-MRI, we set $\Abf = \mathcal{P}\mathcal{F}$, where $\mathcal{P}$ is a binary selection matrix representing the Fourier space ($k$-space) sampling trajectory, and $\mathcal{F}$ is the discrete Fourier transform. The ground truth image is shown in Figure \ref{fig:mri_rec_err}(a). We use radial mask $\mathcal{P}$ with three different sampling ratios  $10 \%$, $20 \%$ and $30 \%$ in this experiments. The one with 10\% sampling ratio is shown in Figure \ref{fig:mri_rec_err}(a). We randomly select $150$ 2D images from the brain MRI datasets \cite{mridata}, then extract the main center region of interests (size $190\times190$) of every image as the ground truth images $\xhat$, and set the data to $\bbf = \Abf \xhat$. Then we randomly select $100$ images for training and use the other $50$ for testing.

During the training, for each of the sampling ratios 10\%, 20\%, and 30\%, we train LDA for phase numbers $K=3,5,\dots,11$, and the PSNR obtained for each case is shown in the left panel of Figure \ref{fig:6}.
\blue{For comparison, we also apply Zero-filling \cite{bernstein2001effect}, ADMM-Net \cite{NIPS2016_6406}, ISTA-Net$^+$ \cite{Zhang2018ISTANetIO} and Variational Network (VN) \cite{HKK18} to the same data. We use $\xbf_0 = \mathbf{0}$ as the initial for ADMM-Net, ISTA-Net$^+$ and LDA, whereas VN takes the result of zero-filling as initial.}
The quantitative comparison results are shown in Table \ref{www}, where the PSNR, the relative error (RelErr) of the reconstruction $\xbf$ to the ground truth $\xhat$ defined by $\|\xbf - \xhat\|/\|\xhat\|$, and the structural similarity index (SSIM) \cite{wang2004image} are provided for each of the three sampling ratios.
\blue{
%
These results show that LDA generates more accurate images using relatively much fewer network parameters.
Figure \ref{fig:mri_rec_err} shows the reconstructed images obtained by ADMM-Net, ISTA-Net$^+$, VN and LDA under sampling ratio 10\%, as well as the corresponding pointwise absolute error where brighter pixels indicate larger errors. From Figure \ref{fig:mri_rec_err}, it can be seen that LDA attains much lower error and better reconstruction quality. 
}

\begin{table}[htb]
\centering
\caption{Average PSNR (dB), RelErr, and SSIM of the reconstructions obtained by ADMM-Net, ISTA-Net$^+$, Variational Network (VN) and LDA on CS-MRI dataset with sampling ratios 10\%, 20\%, and 30\% and the number of learnable network parameters (\#Par).}
\label{www}
\resizebox{\textwidth}{16.5mm}{
\begin{tabular}{ccccccccccr}
\toprule
\multirow{2}{*}{\textbf{Method}} & \multicolumn{3}{c}{\textbf{10\%}} & \multicolumn{3}{c}{\textbf{20\%}} & \multicolumn{3}{c}{\textbf{30\%}} & \multirow{2}{*}{\textbf{\#Par}}\\
& PSNR & RelErr & SSIM & PSNR & RelErr & SSIM & PSNR & RelErr & SSIM \\
\midrule
Zero-filling \cite{bernstein2001effect} &23.45 & 0.2820 & 0.4544 & 27.46 & 0.1806 & 0.5820 & 30.91 & 0.1232 & 0.6665 & NA\\
ADMM-Net \cite{NIPS2016_6406} & 30.43 & 0.1193 & 0.7990 & 37.73 & 0.0516 & 0.9507 & 41.90 & 0.0321 & 0.9646 & \textbf{14,600}\\
ISTA-Net$^+$ \cite{Zhang2018ISTANetIO} & 32.62 & 0.0950&0.9312 & 39.84 & 0.0430&0.9816 & 43.53 & 0.0295&0.9892 & 823,692\\
VN \cite{HKK18} & 33.21 & 0.0882 & 0.9395 & 40.11 & 0.0400 & \textbf{0.9855} & 44.27 & 0.0249 & 0.9928 & 131,050 \\
\textbf{LDA} & \textbf{34.20} & \textbf{0.0790}& \textbf{0.9462}& \textbf{41.03} & \textbf{0.0363}&0.9852 & \textbf{46.12} &\textbf{0.0214} & \textbf{0.9931} & 55,895\\
\bottomrule
\end{tabular}
}
\end{table}
\begin{figure}[htb]
\centering
\subfigure{\includegraphics[width=0.192\textwidth]{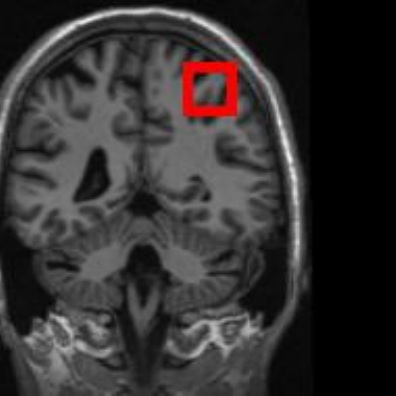}}
\subfigure{\includegraphics[width=0.192\textwidth]{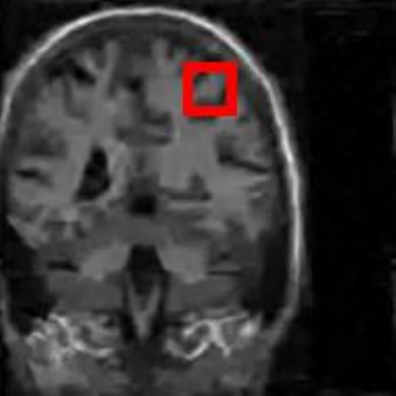}}
\subfigure{\includegraphics[width=0.192\textwidth]{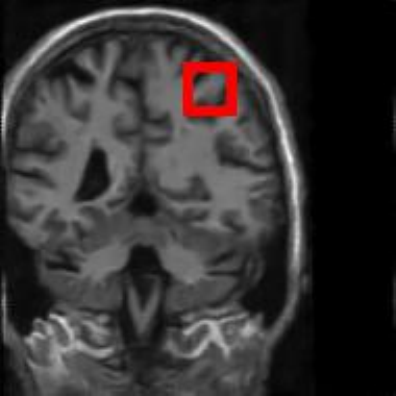}}
\subfigure{\includegraphics[width=0.192\textwidth]{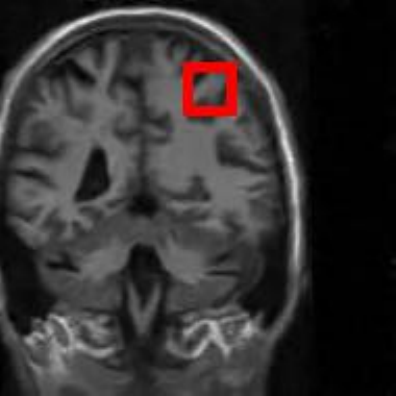}}
\subfigure{\includegraphics[width=0.192\textwidth]{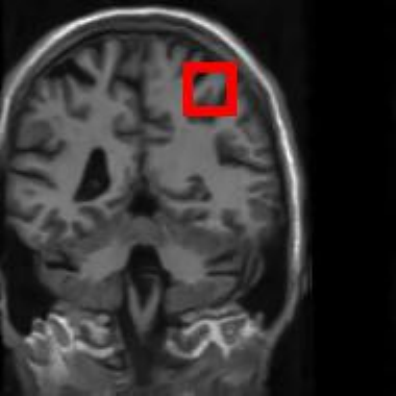}}\\
\subfigure{\includegraphics[width=0.192\textwidth]{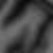}}
\subfigure{\includegraphics[width=0.192\textwidth]{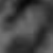}}
\subfigure{\includegraphics[width=0.192\textwidth]{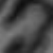}}
\subfigure{\includegraphics[width=0.192\textwidth]{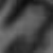}}
\subfigure{\includegraphics[width=0.192\textwidth]{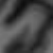}}
\setcounter{subfigure}{0}
\subfigure[Reference \& mask]{\includegraphics[width=0.192\textwidth]{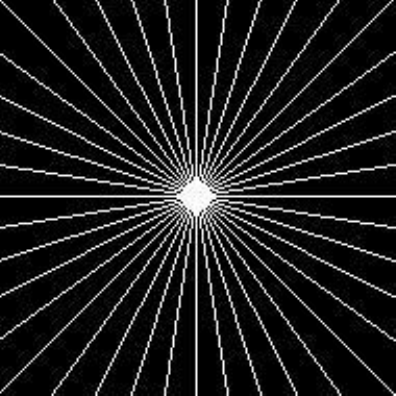}}
\subfigure[ADMM-Net (28.34) ]{\includegraphics[width=0.192\textwidth]{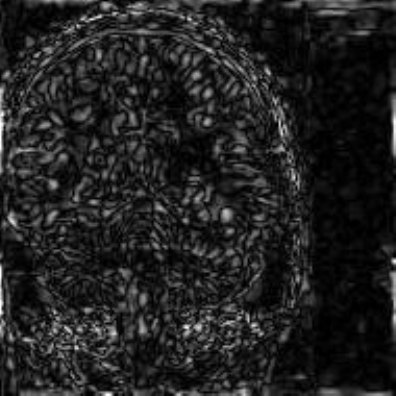}}
\subfigure[ISTA-Net$^+$ (30.25) ]{\includegraphics[width=0.192\textwidth]{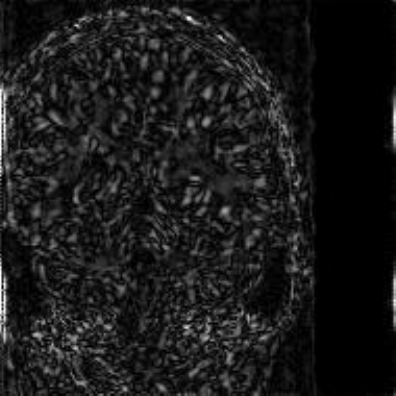}}
\subfigure[VN (31.68) ]{\includegraphics[width=0.192\textwidth]{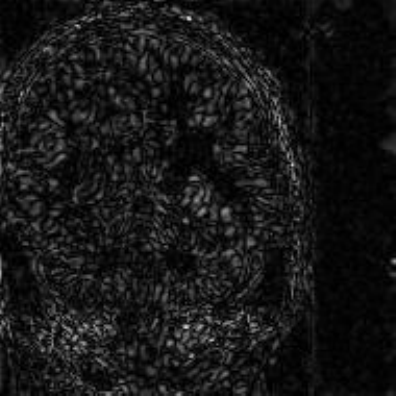}}
\subfigure[LDA (32.78) ]{\includegraphics[width=0.192\textwidth]{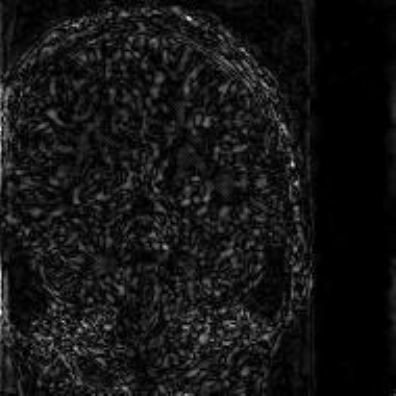}}
\caption{
Representative brain MR images reconstructed by ADMM-Net \cite{NIPS2016_6406}, ISTA-Net$^+$ \cite{Zhang2018ISTANetIO}, VN \cite{HKK18} and the proposed LDA  with CS ratio 10\%. Images in the middle row magnify the corresponding regions of interest in the top row. 
Pointwise absolute errors of the reconstruction in the bottom row are rescaled by the same level for better visualization.
 Brighter pixel indicates larger value. PSNR are shown in the parentheses. Ground truth reference image and a radial mask with 10\% sampling ratio are shown in the first column.}
\label{fig:mri_rec_err}
\end{figure}
\subsection{Experimental results on convergence, parameters and learned feature map}
\subsubsection{Comparison with standard gradient descent}
The proposed LDA performs two residual-type updates, one on the data fidelity $f$ and the other on the regularization $r$ (and the smoothed version $\reps$), which is motivated by the effectiveness of the ResNet structure. In this experiment, we also unroll the standard gradient descent iteration by turning off the $\ubf$ step of LDA, and an accelerated inertial version by setting $\xbf_{k+1} = \xbf_k - \alpha_k \nabla f(\xbf_k) + \theta_k (\xbf_k - \xbf_{k-1})$ where $\theta_k$ is also learned. We call these two networks GD-Net and AGD-Net, respectively. 
We test all three methods following the same experiment setting in Section \ref{subsubsec:cs}, and show the average PSNR versus phase (iteration) number of these methods in Figure \ref{fig:6}. As we can see, LDA achieves a much higher PNSR than both GD-Net and AGD-Net, where the latter perform very similarly. In particular, although AGD has improved iteration complexity in the standard convex optimization setting, its network version does not seem to inherit the effectiveness for deep learning applications. Similar comparison has been made for ISTA-Net and FISTA-Net, which are based on ISTA and FISTA with the latter algorithm provably having improved iteration complexity, but their deep network versions have nearly identical performance \cite{zhang2017learning}. This is also partly due to the nonconvexity of the learned objective function, for which inertial gradient descent may produce improper extrapolation and do not improve efficiency.

\begin{figure}[h]
\centering
\includegraphics[width=0.346\textwidth]{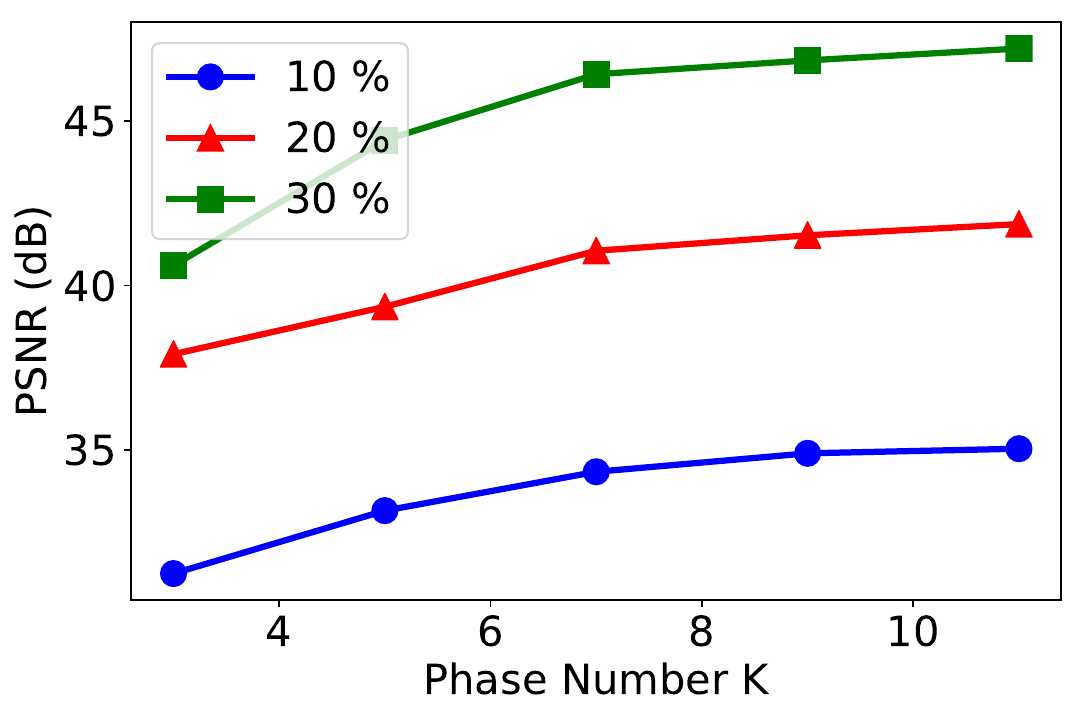}
\includegraphics[width=0.360\textwidth]{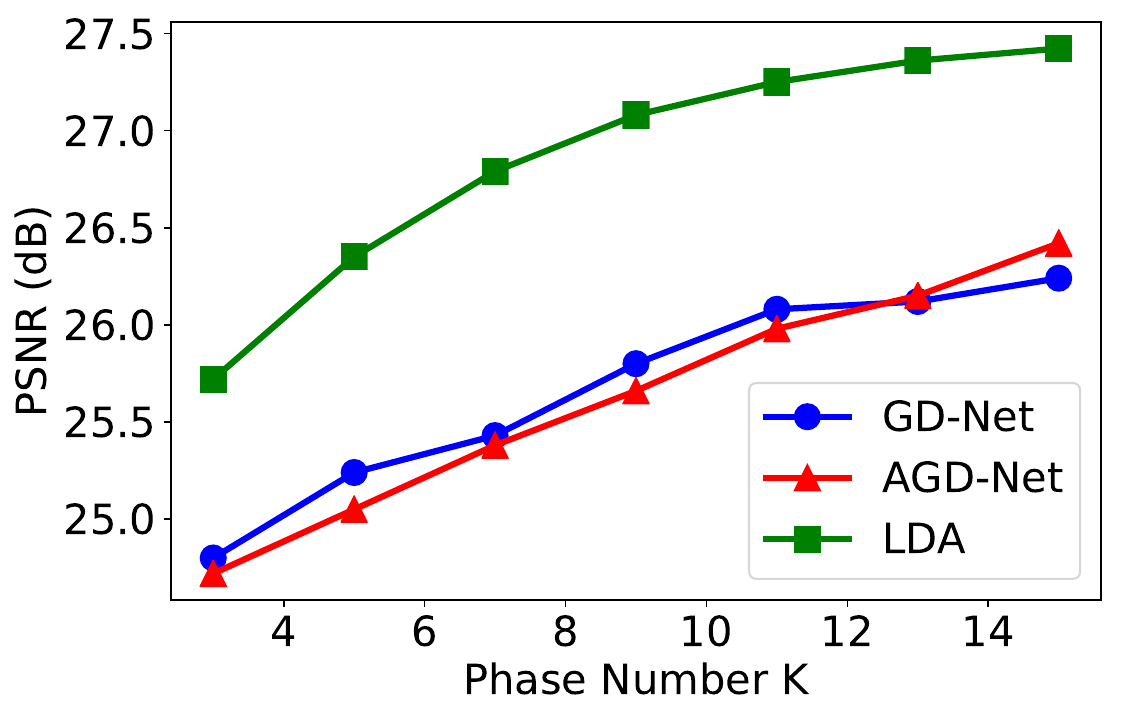}
\caption{\emph{Left}: PSNR of reconstructions obtained by LDA versus phase number $K$ on three CS ratios $10 \%, 20 \%$ and $ 30 \%$ for the brain MR image.
\emph{Right}: PSNR of reconstructions obtained by GD-Net, AGD-Net, and LDA versus phase number $K$ on the block CS image reconstruction with CS ratio $10\%$.
}
\label{fig:6}
\includegraphics[width=0.357\textwidth]{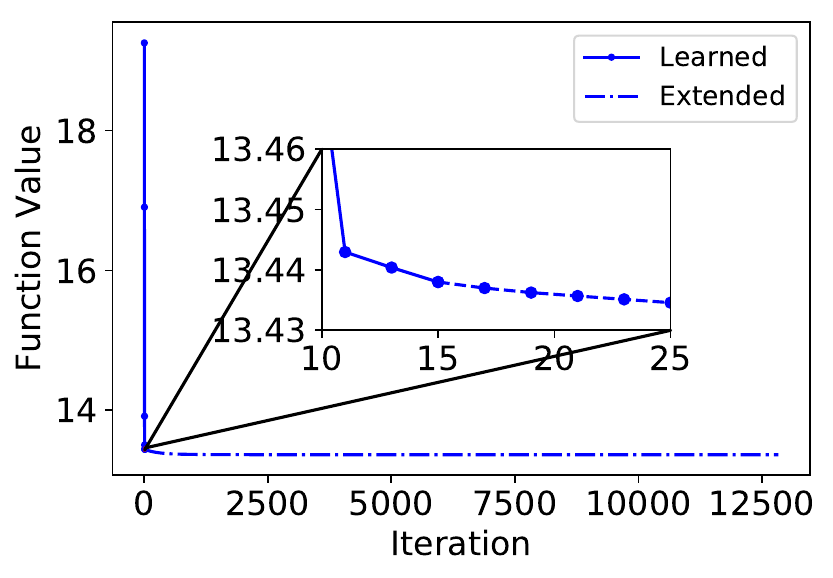}
\includegraphics[width=0.357\textwidth]{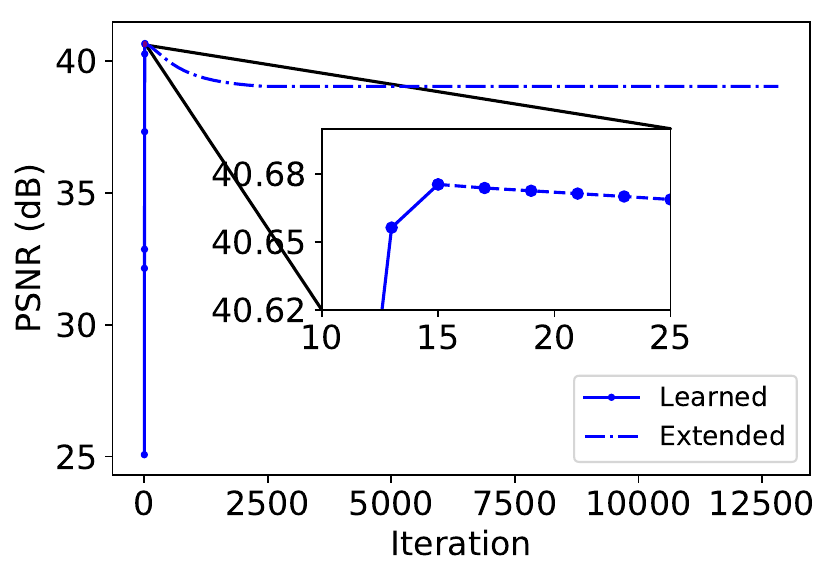}
\caption{
\newblue{Convergence behavior of LDA on compressed image reconstruction on test image with CS ratio $50 \%$ using learned regularization $r$. LDA uses learned algorithm parameters in the first 15 iterations and then follows Algorithm \ref{alg:lda} until the termination criterion in Step 8 is met. The results of the first 15 iterations are plotted in solid lines and those of the other iterations are plotted in dotted lines. \emph{Left}: Objective function value $\phi(\xbf_k)$ versus iteration number $k$. \emph{Right}: PSNR versus iteration number $k$. }
}
\label{fig:7}

\end{figure}

\blue{
\subsubsection{Trade-offs between network performance and complexity}
The regularization term of LDA is learned from training samples, yet there are still a few key network hyperparameters that have to be set manually. Specifically, we investigate the effects of several main factors of the network architecture, including the number of convolutions ($l$) and the depth of the convolution kernels ($d$). We set the default value as $d = 32$ and $l = 4$, and test the performance by varying one of them while keeping the other one as default. For fair comparison, we keep the phase number $K = 7$ for each experiment in this study. The experiment setting and datasets are identical to Section \ref{subsubsec:cs}, and all following results are trained and tested with CS ratio $10 \%$.

We first consider the effect of $d$. We evaluate the instances of $d = 8, 16, 32$ and $48$ respectively. The results are listed and compared in Table \ref{tab:depth}. As expected, the PSNR improves with larger $d$ and increased representation power, but the margin gradually decreases. On the contrary, the number of parameters and running time grow significantly in the meantime. It seems that $d=32$ is a good compromise between network complexity and reconstruction quality, which is also the value we used in LDA in this work.

Next we consider the effect of $l$. We evaluate the cases of different number of convolutions $l = 2, 4$ and $6$. The corresponding tested results are reported in Table \ref{tab:filternum}. Again the PSNR increases with larger $l$. However PSNR only improves slightly when $d$ increases from $4$ to $6$, whereas the parameter number and the testing time also increase significantly. Therefore $l=4$ of LDA appears to be a good balance, which is the value we set for LDA in our other tests. 
\begin{figure}[H]
\centering
\subfigure[Reference]{\includegraphics[width=0.155\textwidth]{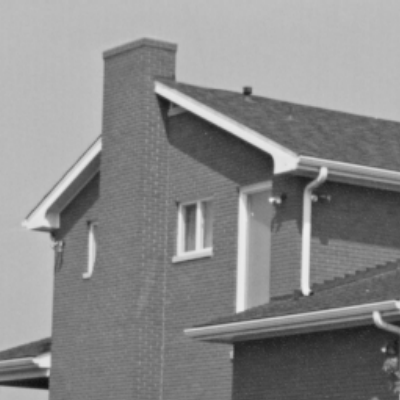}}
\subfigure[Iter 15 (40.68)]{\includegraphics[width=0.155\textwidth]{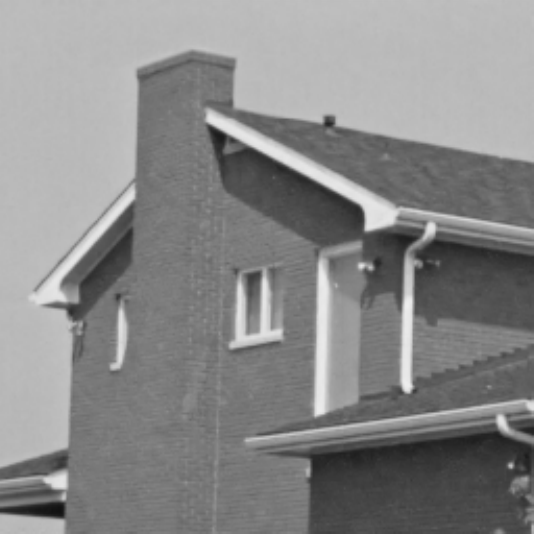}}
\subfigure[Iter 500 (39.99) ]{\includegraphics[width=0.155\textwidth]{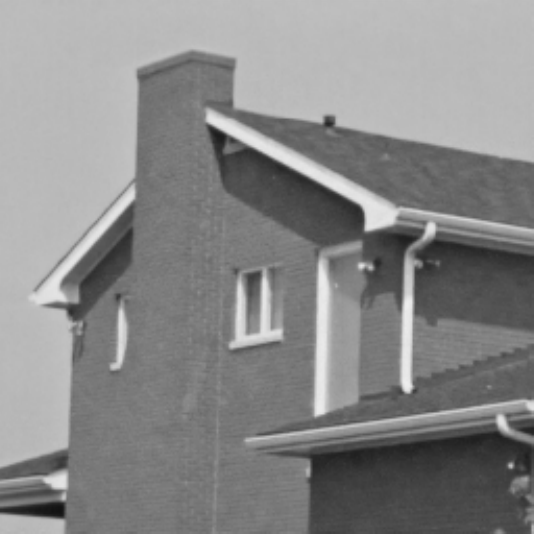}}
\subfigure[Iter 1K (39.49)]{\includegraphics[width=0.155\textwidth]{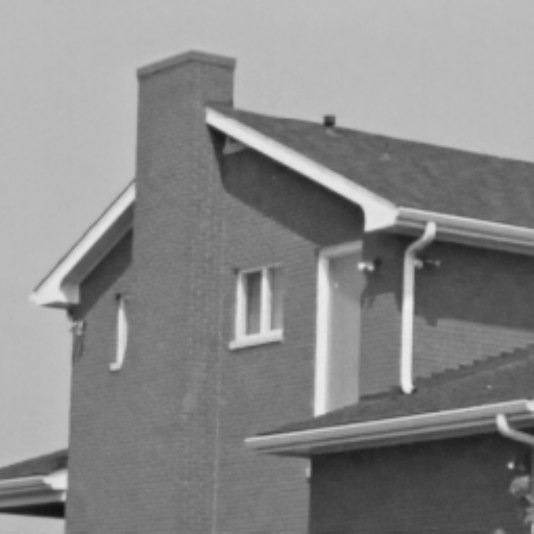}}
\subfigure[Iter 5K (39.07) ]{\includegraphics[width=0.155\textwidth]{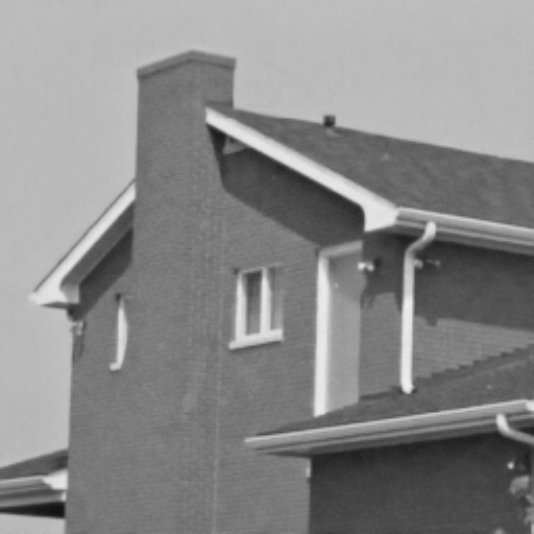}}
\subfigure[Iter 12K (39.05) ]{\includegraphics[width=0.155\textwidth]{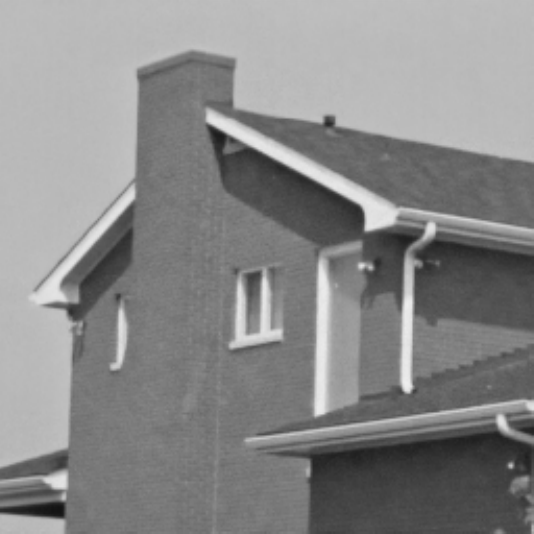}}
\caption{\newblue{Reconstructed \textit{House} images obtained by LDA with CS ratio $50\%$ after 15, 500, 1K, 5K and 12K iterations. PSNR values are given in the parentheses.}}
\label{fig:8}

\end{figure}

\begin{table}[H]
\centering
\caption{
The results of reconstruction associated with different depths of convolution kernels on \textit{Set11} dataset with CS ratio 10\%. The phase number here is set to be 7.
}
\begin{tabular}{lcccc}
\toprule
\textbf{Depth of conv. kernels} & \textbf{8} & \textbf{16} & \textbf{32} & \textbf{48}\\
\midrule 
PSNR (dB) & 25.60 & 26.36 & 26.79 & 26.88 \\
Number of parameters & 1,815 & 7,071 & 27,951 & 62,655 \\
Average testing time (s) & 0.035 & 0.059 & 0.106 & 0.171 \\
\bottomrule
\end{tabular}
\label{tab:depth}
\end{table}
\begin{table}[H]
\centering
\caption{
The results of reconstruction associated with different numbers of convolutions in each phase on \textit{Set11} dataset with CS ratios 10\%. The phase number here is set to be 7.
}
\begin{tabular}{lccc}
\toprule
\textbf{Number of convolutions} & \textbf{2} & \textbf{4} & \textbf{6}\\
\midrule 
PSNR (dB) & 25.83 & 26.79 & 26.95 \\
Number of parameters & 9,519 & 27,951 & 46,383\\
Average testing time (s) & 0.058 & 0.106 & 0.164 \\
\bottomrule
\end{tabular}
\label{tab:filternum}
\end{table}
}
\subsubsection{Convergence behavior of LDA}
As in the standard approach of deep neural network training, we set the phase (iteration) number to $K=15$ in LDA in the experiments above. On the other hand, we proved that the iterates generated by LDA converge to a Clarke stationary point in Section \ref{subsec:convergence}. This provides a theoretical guarantee that the LDA is indeed minimizing an objective function where the regularization is learned, and LDA is expected to perform stably even beyond the trained phases. 
\newblue{To demonstrate this stability empirically, we set $\epsilon_{\text{tol}} = 1.5 \times 10^{-5}$, $\sigma = 10^{3}$ and $\gamma = 0.9$ in LDA and let it continue to run after the initial 15 iterations (we use trained parameters in the first 15 iterations) on the test image \textit{House} in \textit{Set11}. We set $\varepsilon_0 = 1.4 \times 10^{-3}$ which is obtained after learning. LDA automatically reduces $\varepsilon_k$ and computes the step sizes $\alpha_k$ using the standard line search backtracking with step size reduction rate $0.5$ such that $\phi_{\epsk}(\vkp) - \phi_{\epsk}(\xbf_k) \le - \tau \| \vkp - \xbf_k\|^2$ with $\tau$ set to  $0.35$.
Under this setting, the termination criterion in Step 8 of Algorithm \ref{alg:lda} is met when total iterations $k = 12,832$ and $\varepsilon_k$ {is reduced} $109$ times. The changes of objective function value and PSNR in iteration number $k$ are shown in Figure \ref{fig:7}.
The objective function value $\phi(\xbf_k)$ versus iteration $k$ is shown in the left panel of Figure \ref{fig:7} and the corresponding PSNR is shown in the right panel. As we can see, LDA continues to reduce function value during these extended iterations, as shown in our convergence analysis in Section \ref{subsec:convergence}.
The PSNR slightly drops after the trained 15 iterations and remains steady since after $2,500$ iterations. The reconstructed images during the iterations seem to be very similar without any visual artifacts despite of the slight drop of PSNR. 
In Figure \ref{fig:8}, we show the reconstructed images obtained after $15$, $500$, $1000$, $5000$ and $12000$ iterations, which appear to be very similar without any visual artifacts.
}

\subsubsection{Learned feature map in LDA}
A main advantage of LDA is that the feature map $\gbf$ in the regularization can be learned adaptively from the training data such that the network is more interpretable. This data-driven approach yields automated design of feature maps which are often more complex and efficient, rather than the manually crafted features in the classical image reconstruction models.

In this experiment, we plot the norm of the gradient (as a 2D vector computed by forward finite difference) at every pixel $i$ which is used as the feature of the TV based image reconstruction and also $\|\gbf_i(\xbf)\|$ at pixel $i$ of the regularization learned in LDA in Figure \ref{fig:vs_tv}. We can see that the learned feature map $\gbf$ captures more important structural details of the images, such as the antennae of the butterfly, the buildings behind the cameraman, and the bill of the parrot. These details are crucial in species detection and facial recognition, 
which seem to be accurately recovered using the learned feature map $\gbf$ but are heavily blurred or completely missing from the simple gradient image used in TV regularization. This also explains the better image quality obtained by LDA compared to the classical TV based image reconstruction methods.
\begin{figure}[H]
\centering
\includegraphics[width=0.25\textwidth]{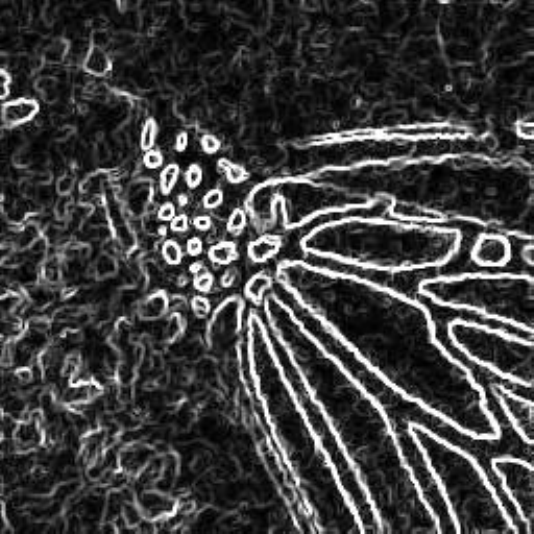}
\includegraphics[width=0.25\textwidth]{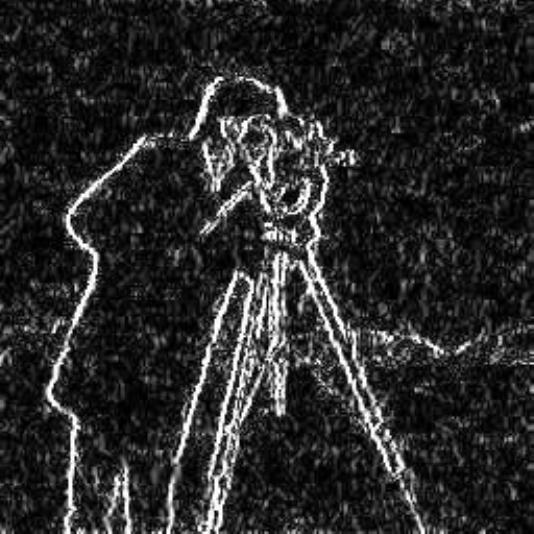}
\includegraphics[width=0.25\textwidth]{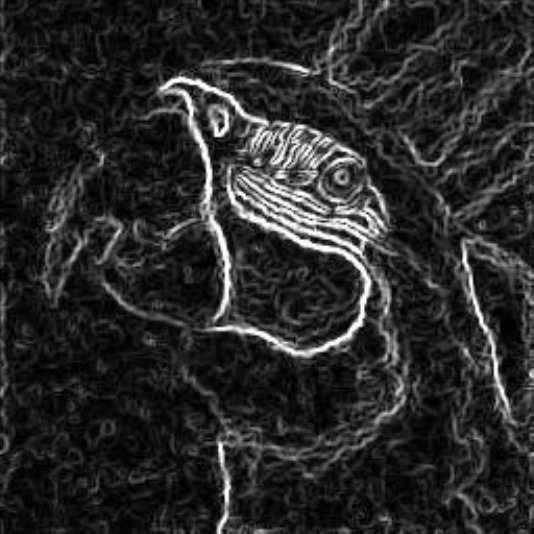}
\includegraphics[width=0.25\textwidth]{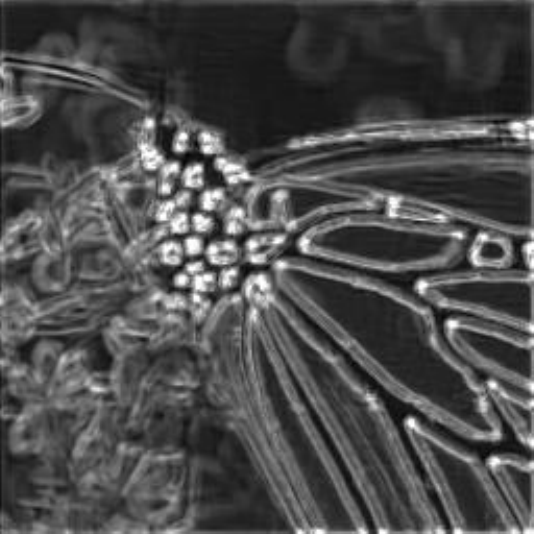}
\includegraphics[width=0.25\textwidth]{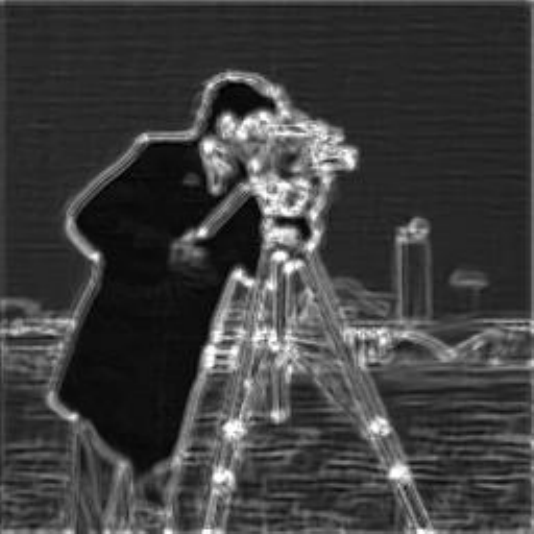}
\includegraphics[width=0.25\textwidth]{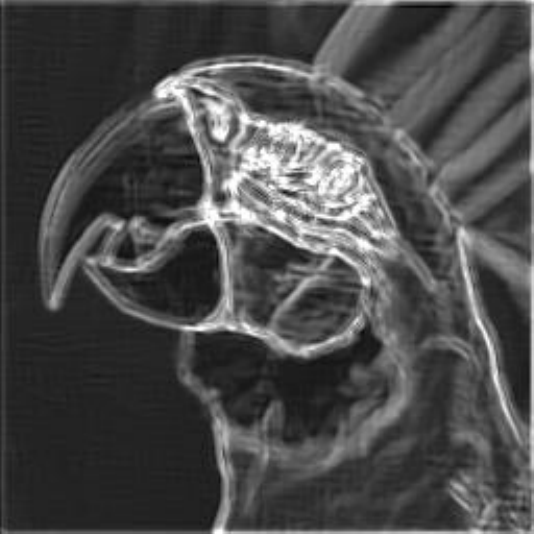}
\caption{The norm of the gradient at every pixel in TV based image reconstruction (top row) and the norm of the feature map $\gbf$ at every pixel learned in LDA (bottom row) when CS ratio 10\%. Important details, such as the antennae of the butterfly, the buildings behind the cameraman, and the bill of the parrot, are faithfully recovered by LDA.}
\label{fig:vs_tv}
\end{figure}
\section{Conclusion}
\label{sec:conclusion}
We proposed a general learning based framework for solving nonsmooth and nonconvex image reconstruction problems, where the regularization function is modeled as the composition of the $l_{2,1}$ norm and a smooth but nonconvex feature mapping parametrized as a deep convolutional neural network. We developed a descent-type algorithm to solve the nonsmooth nonconvex minimization problem by leveraging Nesterov's smoothing technique and the idea of residual learning, and learn the network parameters such that the outputs of the algorithm match the references in training data. Our method is versatile as one can employ various modern network structures into the regularization, and the resulting network inherits the guaranteed convergence of the algorithm. The proposed network is applied to a variety of real-world image reconstruction problems, and the numerical results demonstrate the outstanding performance and efficiency of our method.

\section*{Acknowledgments}
This research was partially supported by NSF grants DMS-1319050, DMS-1719932, DMS-1818886, DMS-1925263, CMMI-2016571 and University of Florida AI Catalyst Grants.

\bibliographystyle{abbrv}
\bibliography{egbib}

\end{document}